\documentclass{article}


\PassOptionsToPackage{numbers, compress}{natbib}
\usepackage[preprint]{neurips_2025}




\usepackage[utf8]{inputenc} 
\usepackage[T1]{fontenc}    
\usepackage{hyperref}       
\usepackage{url}            
\usepackage{booktabs}       
\usepackage{amsfonts}       
\usepackage{nicefrac}       
\usepackage{microtype}      
\usepackage{xcolor}         
\usepackage{amsthm, amsmath, amssymb}
\usepackage{mathtools}
\usepackage{comment}
\usepackage{graphicx}
\usepackage{subcaption}
\usepackage{wrapfig}
\usepackage{algorithm}
\usepackage{algorithmic}
\usepackage[capitalize,noabbrev]{cleveref}

\theoremstyle{plain}
\newtheorem{theorem}{Theorem}[section]
\newtheorem{proposition}[theorem]{Proposition}
\newtheorem{lemma}[theorem]{Lemma}

\theoremstyle{definition}
\newtheorem{definition}[theorem]{Definition}
\newtheorem{assumption}[theorem]{Assumption}
\theoremstyle{remark}
\newtheorem{remark}[theorem]{Remark}

\newcommand{\cS}{\mathcal{S}}
\newcommand{\cA}{\mathcal{A}}
\newcommand{\cN}{\mathcal{N}}
\newcommand{\cD}{\mathcal{D}}
\newcommand{\cE}{\mathcal{E}}
\newcommand{\cR}{\mathcal{R}}
\newcommand{\mR}{\mathbb{R}}
\newcommand{\mE}{\mathbb{E}}

\DeclareMathOperator{\argmax}{\operatorname*{\arg\max}}

\title{Action Dependency Graphs for Globally Optimal Coordinated Reinforcement Learning}

%

\author{%
  Jianglin Ding,~~ Jingcheng Tang,~~ Gangshan~Jing\thanks{Corresponding author.} 
\\  School of Automation, Chongqing University \\
  \texttt{20191921@cqu.edu.cn}\\
  \texttt{tangjingcheng@stu.cqu.edu.cn}\\
  \texttt{jinggangshan@cqu.edu.cn} 
}

\begin{document}

\maketitle

\begin{abstract}
    Action-dependent individual policies, which incorporate both environmental states and the actions of other agents in decision-making, have emerged as a promising paradigm for achieving global optimality in multi-agent reinforcement learning (MARL). 
    However, the existing literature often adopts auto-regressive action-dependent policies, where each agent's policy depends on the actions of all preceding agents. 
    This formulation incurs substantial computational complexity as the number of agents increases, thereby limiting scalability. 
    In this work, we consider a more generalized class of action-dependent policies, which do not necessarily follow the auto-regressive form.
    We propose to use the `action dependency graph (ADG)' to model the inter-agent action dependencies. Within the context of MARL problems structured by coordination graphs, 
    we prove that an action-dependent policy with a sparse ADG can achieve global optimality, provided the ADG satisfies specific conditions specified by the coordination graph. 
    Building on this theoretical foundation, we develop a tabular policy iteration algorithm with guaranteed global optimality. 
    Furthermore, we integrate our framework into several SOTA algorithms and conduct experiments in complex environments. 
    The empirical results affirm the robustness and applicability of our approach in more general scenarios, underscoring its potential for broader MARL challenges.
\end{abstract}

\section{Introduction}

Achieving effective multi-agent reinforcement learning (MARL) in fully cooperative environments requires agents to coordinate their actions to maximize collective performance. 
Most existing MARL methods rely on independent policies \cite{zhang2021multi, oroojlooy2023review}, where each agent makes decisions based solely on its state or observation. 
Although computationally tractable and scalable, these completely decentralized policies are often suboptimal \cite{fu2022revisiting}. 
The primary limitation lies in their tendency to converge to one of many Nash equilibrium solutions \cite{ye2022towards, jing2024distributed}, which may not correspond to the globally optimal solution.

The emergence of action-dependent policies \cite{fu2022revisiting} offers a promising solution to this challenge. 
By incorporating the actions of other agents into an agent's decision-making process, 
action-dependent policies enable more effective cooperation and achieve superior performance compared to independent policies. 
We introduce the action dependency graph (ADG), a directed acyclic graph, to represent the action dependencies required for agents to make decisions.
Theoretical studies \cite{bertsekas2021multiagent, chen2023context} demonstrate that policies with auto-regressive forms, 
associated with fully dense ADGs—where replacing each directed edge with an undirected edge yields a complete graph—guarantee global optimality.
However, fully dense ADGs pose substantial scalability issues, as they require a high degree of interdependence and coordination.

Sparse ADGs, which involve fewer inter-agent dependencies, offer a more scalable alternative. 
This leads to a critical question: can action-dependent policies with sparse ADGs still guarantee global optimality?
To answer this question, we build on the framework of coordinated reinforcement learning \cite{guestrin2002coordinated}, where the cooperative relationship between agents is described by a coordination graph (CG). We find that global optimality can still be achieved using an action-dependent policy with a sparse ADG, 
provided that a specific relationship between the ADG and the CG is satisfied.

The contributions of this paper are summarized as follows.
(i). We introduce a novel theoretical framework that bridges the gap between scalability and optimality of MARL. 
To the best of our knowledge, this is the first work that seamlessly integrates coordination graphs and action-dependent policies.
(ii). We provide a theoretical demonstration of the advantages of action-dependent policies with sparse ADGs over independent policies. 
Specifically, we prove that for MARL problems characterized by sparse CGs, policies associated with sparse ADGs satisfying our proposed condition achieve global optimality. This work pioneers the analysis of policies with sparse ADGs, in contrast with prior studies \cite{ye2022towards, bertsekas2021multiagent, wang2022more}, which focus exclusively on fully dense ADGs. 
(iii). We integrate our theoretical insights with state-of-the-art algorithms, and show their effectiveness in the scenarios of coordination plymatrix games, adaptive traffic signal control, and StarCraft II.

\section{Related Work}

{\bfseries Independent policy.} 
The majority of the literature on MARL represents the joint policy as the Cartesian product of independent individual policies. 
Value-based methods such as IQL \cite{tan1993multi}, VDN \cite{sunehag2017value}, QMIX \cite{rashid2020monotonic}, and QTRAN \cite{son2019qtran} employ local value functions that depend only on the state or observation of each agent. 
Similarly, policy-based methods such as MADDPG \cite{lowe2017multi}, COMA \cite{foerster2018counterfactual}, MAAC \cite{iqbal2019actor}, and MAPPO \cite{yu2103surprising} directly adopt independent policies. 
These approaches often fail to achieve global optimality, as they are not able to cover all strategy modes \cite{fu2022revisiting}.

{\bfseries Coordination graph.}
Some value-based methods \cite{bohmer2020deep, li2020deep, wangcontext, castellini2021analysing} recognize that the limitation of independent policies is due to a game-theoretic pathology known as relative overgeneralization \cite{panait2006biasing}.
To mitigate this, they employ a higher-order value decomposition framework by introducing the coordination graph (CG) \cite{guestrin2002coordinated}. 
In this graph, the vertices represent agents, and the edges correspond to pairwise interactions between agents in the local value functions. 
While CGs improve cooperation by considering inter-agent dependencies, the resulting joint policy cannot be decomposed into individual policies. 
Consequently, decision-making algorithms still require intensive computation, such as Max-Plus \cite{rogers2011bounded} or Variable Elimination (VE) \cite{bertele1972nonserial}. 
When the CG is dense, these computations may become prohibitively time consuming, making the policy difficult to execute in real time.

{\bfseries Action-dependent policy.}
In contrast to independent policies, action-dependent policies \cite{wang2022more, ruan2022gcs, li2023ace, li2024backpropagation} incorporate not only the state, but also the actions of other agents into an agent's decision-making process. 
The action dependencies among agents can be represented by a directed acyclic graph, which we refer to as the action dependency graph (ADG).
In some literature, the action-dependent policy is also referred to as Bayesian policy \cite{chen2023context} or auto-regressive policy \cite{fu2022revisiting}. 
Moreover, the use of action-dependent policies can be viewed as a mechanism to leverage communications for enhancing cooperation \cite{duan2024group, zhou2023centralized}.
Some approaches \cite{ye2022towards, bertsekas2021multiagent, wen2022multi} transform a multi-agent MDP into a single-agent MDP with a sequential structure, 
enabling each agent to consider the actions of all previously decided agents during decision-making. 
This transformation ensures the convergent joint policy to be globally optimal \cite{bertsekas2021multiagent}. 
However, the fully dense ADG makes these methods computationally expensive and impractical for large-scale systems. 
For more general ADGs, existing theories can only guarantee convergence to a Nash equilibrium solution \cite{chen2023context}. 
Currently, no theoretical evidence demonstrates the superiority of action-dependent policies with sparse dependency graphs over independent policies.

\section{Preliminary}\label{sec:preliminary}

In this paper, we formulate the cooperative multi-agent reinforcement learning problem as a Markov game, 
defined by the tuple $\langle \mathcal{N}, \mathcal{S}, \mathcal{A}, P, r, \gamma \rangle$, 
where $\mathcal{N} = \{1, \dots, n\}$ denotes the set of agents, $\mathcal{S}$ represents the finite state space, 
$\mathcal{A} = \prod_{i=1}^n \mathcal{A}_i$ is the Cartesian product of the agents’ finite action spaces, 
$P: \mathcal{S} \times \mathcal{A} \times \mathcal{S} \to [0,1]$ specifies the Markovian transition model, 
$r: \mathcal{S} \times \mathcal{A} \to \mathbb{R}$ defines the reward function, and $\gamma \in [0,1)$ is the discount factor.

A deterministic joint policy $\pi: \cS\to\cA$ maps a state to a joint action,
while a stochastic joint policy $\pi: \cS\to\Delta(\cA)$ maps a state space to a probability distribution over the joint action space.
Typically, a joint policy is represented as the product of \emph{independent policies} $\pi_i: \cS\to\Delta(\cA_i)$, such that $\pi(a|s) = \prod_{i=1}^n\pi_i(a_i|s)$.
The state value function induced by a joint policy $\pi$ is defined as:
\begin{equation}
    V^{\pi}(s) := \mE_{a^t\sim\pi} \left[ \sum_{t=0}^{\infty} \gamma^t r(s^t, a^t) \middle| s^0 = s \right],
\end{equation}
and the corresponding state-action value function is defined as:
\begin{equation}
    Q^{\pi}(s,a) := \mE_{a^t\sim\pi} \left[ \sum_{t=0}^{\infty} \gamma^t r(s^t, a^t) \middle| s^0 = s, a^0 = a \right].
\end{equation}
Given any $V \in \cR(\cS)$, where $\cR(\cS)$ denotes the set
of all real-valued functions $J: \cS\to\mR$, we define:
\begin{equation}
    Q^V(s,a) = r(s,a) + \gamma\sum_{s'\in\cS}P(s'|s,a)V(s').
\end{equation}

The Bellman operator $T_{\pi}: \mathcal{R}(\mathcal{S}) \to \mathcal{R}(\mathcal{S})$ and the Bellman optimal operator $T: \mathcal{R}(\mathcal{S}) \to \mathcal{R}(\mathcal{S})$ are defined as:
\begin{equation}
    T_{\pi}V(s) = \sum_{a\in\cA}\pi(a|s)Q^V(s,a), \quad TV(s) = \max_{a\in\cA}Q^V(s,a)
\end{equation}
The value function $V^{\pi}$ satisfies the Bellman equation $T_{\pi}V^\pi = V^\pi$, while the optimal value function $V^*$ satisfies the Bellman optimal equation $TV^* = V^*$.



\subsection{Coordination Graph}


\begin{wrapfigure}{r}{0.45\linewidth}
    \vskip -0.3in
    \centerline{\includegraphics[width=\linewidth]{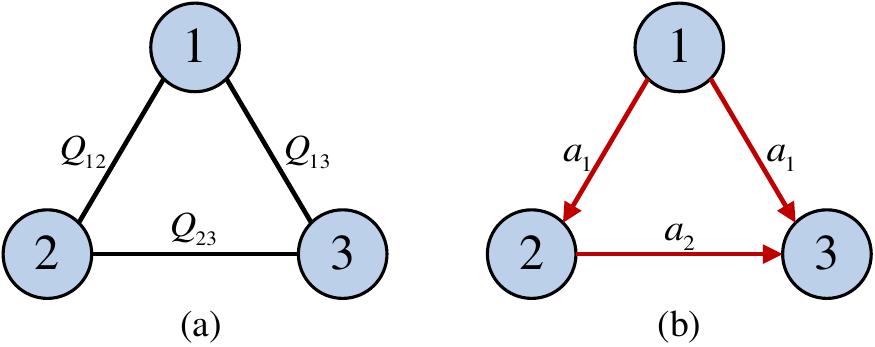}}
    \caption{A coordination graph (a) and an action dependency graph (b).}\label{fig:cg-adg}
    \vskip -0.2in
\end{wrapfigure}

In many practical scenarios such as sensor networks \cite{zhang2011coordinated}, wind farms \cite{bargiacchi2018learning}, mobile networks \cite{bouton2021coordinated}, etc., 
the value function $Q$ can be approximated as the sum of local value functions, each depending on the states and actions of a subset of agents.
A widely used approach to representing this decomposition is the use of the \emph{coordination graph} (CG) \cite{guestrin2002coordinated}, which captures the pairwise coordination relationships between agents.
Formally, we define a CG as follows.

\begin{definition}[Coordination Graph]
    An undirected graph\footnote{
    In this paper,  the vertices and edges of the graph are represented by agent indices and index pairs. 
    }
    $G_{c} = (\cN, \cE_{c})$ is a CG of a function $Q: \cS\times\cA\to\mR$, if  
    there exists a local value function $Q_{ij}:\cS\times\cA_i\times\cA_j\to\mR$ for every edge $(i,j)\in\cE_{c}$, and a local value function $Q_{i}:\cS\times\cA_i\to\mR$ for every vertex $i\in\cN$, such that
    \begin{equation}
        Q(s,a) = \sum_{i\in\cN}Q_{i}(s,a_i) + \sum_{(i,j)\in\cE_{c}}Q_{ij}(s,a_i,a_j).
    \end{equation}
\end{definition}
\begin{remark}
    If $G_c$ is a subgraph of $G'_c$, and $G_c$ is a CG of $Q$, then $G'_c$ is also a CG of $Q$.
    Therefore, multiple CGs may correspond to the same value function $Q$.
\end{remark}
Without loss of generality, we assume that $G_{c}$ is connected; otherwise, the problem can be decomposed into independent subproblems depending on the connected components of $G_c$.
In a connected graph, each vertex is involved in an edge,
allowing the local value functions associated with vertices to be merged into those local value functions associated with edges, yielding:
\begin{equation}\label{Qsum}
    Q(s,a) = \sum_{(i,j)\in\cE_{c}}Q_{ij}(s,a_i,a_j).
\end{equation}
\cref{fig:cg-adg} (a) shows a CG where the function $Q$ can be decomposed as:
\begin{equation}
    Q(s,a) = Q_{12}(s,a_1,a_2) + Q_{13}(s,a_1,a_3) + Q_{23}(s,a_2,a_3).
\end{equation}

We further characterize a class of Markov games where the equation \eqref{Qsum} holds exactly, providing a natural generalization of the concept of decomposable games \cite{dou2022understanding}.
Due to the space limitation, the details is provided in the appendix.

\subsection{Action Dependency Graph}

In MARL, a joint policy is typically represented as the product of independent policies, each depending only on the current state.
However, not every distribution over the joint action space can be represented as such a product. 
Given a distribution $\pi$ over the joint action space, the chain rule of conditional probability states:
\begin{equation}
    \pi(a|s) = \prod_{i=1}^n\pi_i(a_i|s,a_1,\dots,a_{i-1}).
\end{equation}
This formulation gives rise to a broader class of individual policies, termed \emph{action-dependent policies}, 
which are not necessarily conditionally independent but may instead depend on the actions of other agents.
To formalize the action-dependent policy, we associate a joint policy $\pi$ with a directed acyclic graph $G_{d} = (\cN, \cE_{d})$, called the \emph{action dependency graph} (ADG).
\begin{definition}[Action Dependency Graph]
A directed acyclic graph $G_{d} = (\cN, \cE_{d})$ is said to be the ADG of a policy $\pi$ if the policy of each agent $i$ depends on the actions of agents in $N_{d}(i)=\{j\in\mathcal{N}:(j,i)\in\mathcal{E}_d\}$. More specifically, the joint policy $\pi$ is expressed as:
    \begin{equation}
        \pi(a|s) = \prod_{i=1}^n\pi_i(a_i|s,a_{N_{d}(i)}),
    \end{equation}
where $\pi_i: \cS\times\prod_{j\in N_{d}(i)}\cA_j\to\Delta(\cA_i)$ is the policy of agent $i$,
and $a_{N_d(i)}=(a_j)_{j\in N_d(i)}$\footnote{Throughout this paper, we use a vector with a set as its subscript to denote the components of the vector indexed by elements in that set.} denotes the subvector of $a$ indexed by $N_d(i)$.
\end{definition}
\cref{fig:cg-adg} (b) shows the ADG of a joint policy $\pi$ with the following expression:
\begin{equation}
    \pi(a|s) = \pi_1(a_1|s)\pi_2(a_2|s, a_1)\pi_3(a_3|s, a_1, a_2).
\end{equation}
The acyclic nature of the ADG ensures that action dependencies do not form loops, which would otherwise lead to decision-making deadlocks, rendering polices infeasible. 
Every directed acyclic graph admits at least one topological sort, which can determine the order in which agents generate their actions.
Without loss of generality, we assume that the indices of agents in $\cN$ follow a topological sort.
This implies that $j < i$ for all $j \in N_{d}(i)$ and all $i\in\cN$.

Throughout this paper, we consider a Markov game structured by a CG. For analytical simplicity, we assume full observability, which means that each agent has access to the global state $s$. Nonetheless, our theoretical results can be integrated into practical algorithms, so as to adapt to scenarios with partial observability, which will be shown in \cref{sec:Practical Algorithm}.

\section{ADG with Optimality Guarantee}\label{sec:dependency}

While many methods designed for independent policies cannot guarantee convergence to a globally optimal policy, 
theoretical results do exist that ensure convergence to a Nash equilibrium policy \cite{zhang2022global, kuba2022trust}, which satisfies the following condition:
\begin{equation}\label{eq:Nash Equilibrium}
    V^{\pi}(s) = \max_{\pi'_i} V^{\pi'_i, \pi_{-i}}(s), \forall s\in\cS ,i\in\cN.
\end{equation}
A Nash equilibrium policy is equivalent to the agent-by-agent optimal policy introduced in \cite{bertsekas2021multiagent} (see proof in \cref{prop:nash}).
Specifically, a policy $\pi$ is agent-by-agent optimal if:
\begin{equation}\label{eq:agent-by-agent}
    V^{\pi}(s) = \max_{\pi'_i} \mE_{a_i\sim\pi'_{i}, a_{-i}\sim\pi_{-i}} \left[ Q^{\pi}(s,a) \right], \forall s\in\cS ,i\in\cN.
\end{equation}
Here, $-S$ denotes the complement of set $S \subseteq \cN$, and $-i$ is the shorthand for $- \{i\}$. 
The joint policy of agents in $S$ is denoted by $\pi_{S}$.
we also write joint policy $\pi$ as $(\pi_{S}, \pi_{-S})$.
For brevity, we use $\mE_{\pi} \left[ \cdot \right]$ as the shorthand for $\mE_{a \sim\pi} \left[ \cdot \right]$, omitting the action when the context is clear.   

Unlike independent policies, an action-dependent policy takes the actions of other agents (determined by $G_d$) as an additional part of the state representation.
This enables the algorithms to achieve stronger solutions than agent-by-agent optimal policy,
which we refer to as $G_d$-locally optimal policy.
\begin{definition} 
    A joint policy $\pi$ is said to be $G_d$-locally optimal if for all $(s, a_{N_{d}(i)})\in\cS\times\prod_{j\in N_{d}(i)}\cA_j$ and $i\in\cN$, we have:
    \begin{equation}\label{eq:gd-locally}
        \mE_{\pi_{-N_{d}(i)}} \left[ Q^{\pi}(s,a) \right] = \max_{\pi'_{i}} \mE_{\pi'_{i}, \pi_{-N_{d}[i]}} \left[ Q^{\pi}(s,a) \right], 
    \end{equation}
    where $N_{d}[i] := N_{d}(i)\cup \{i\}$ denotes closed in-degree neighborhood of $i$ in $G_d$.
\end{definition}

\begin{remark}
    When $G_d$ is an empty graph, the definition of $G_d$-locally optimal is equivalent to that of agent-by-agent optimal. 
    As the number of edges in $G_d$ increases, the condition \eqref{eq:gd-locally} becomes more restrictive. 
    This results in fewer agent-by-agent optimal policies meeting the $G_d$-locally optimal criterion. 
    In extreme, when $G_d$ is a fully dense directed acyclic graph, the $G_d$-locally optimal policy aligns with the globally optimal policy. 
    When the discussion is restricted to deterministic policy class, a $G_d$-locally optimal policy is always agent-by-agent optimal, though the converse does not necessarily hold. 
    In this context, the class of $G_d$-locally optimal policies can be rigorously characterized as a subset of the agent-by-agent optimal policy class. 
\end{remark}

\subsection{Coordination Polymatrix Game}\label{sec:Polymatrix}
To illustrate the suboptimality of Nash equilibrium and how action-dependent policies can address this issue, we consider an example of a \emph{coordination polymatrix game} \cite{cai2011minmax}. 
A coordination polymatrix game can be viewed as a single-step decision problem, defined by the Markov game tuple $\langle \cN, \cS, \cA, P, r, \gamma \rangle$,
where $\cS=\emptyset$ and $\gamma=0$. Additionally, the game includes an undirected graph $G_c=(\cN, \cE_c)$ and a set of payoff functions $\{r_{ij}\}_{(i,j)\in\cE_c}$,
which determine the global reward $r(a) = \sum_{(i,j)\in\cE_{c}}r_{ij}(a_i,a_j)$.
In this context, $r$ is equivalent to the (state)-action value function $Q:\cA\to\mR$, and $G_c$ is the CG of $Q$. \cref{fig:polymatrix} depicts a polymatrix game with three agents, each having two actions, $\cA_i=\{0,1\}, i=1,2,3$. 
The payoff matrices specify the rewards for each pair of agents. For example, if agents $1$ and $2$ both choose action $0$, they receive a payoff of $1$.


\begin{wrapfigure}{r}{0.5\linewidth}
    \centerline{\includegraphics[width=\linewidth]{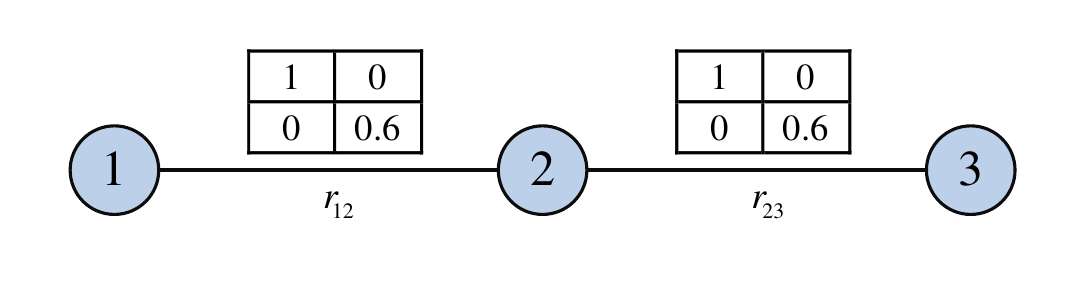}}
    \vskip -0.1in
    \caption{A polymatrix game on a line CG.}\label{fig:polymatrix}
\end{wrapfigure}

When using deterministic independent policies, the joint policies $\pi=(1,1,1)$ and $\pi=(0,0,0)$ are both agent-by-agent optimal.
However, only $\pi=(0,0,0)$ is globally optimal. Although $\pi=(1,1,1)$ is suboptimal,
agents lack sufficient motivation to unilaterally change their actions, highlighting the limitations of independent policies in ensuring optimal cooperation.

Now consider deterministic action-dependent policies associated with an ADG $G_d$, where $\cE_d=\{(1,2),(2,3)\}$.
When $\pi_1=1, \pi_2(0)=0, \pi_2(1)=1, \pi_3(0)=0, \pi_3(1)=1$, the joint policy remains $\pi=(1,1,1)$. If agent $1$ switches to action $0$, agents $2$ and $3$ will also switch to action $0$,
resulting in $r(0,0,0) = 2$, which exceeds $r(1,1,1) = 1.2$.
This incentivizes agent $1$ to choose action $0$, leading to the globally optimal policy, which is also the only $G_d$-locally optimal policy in this case.
However, not all ADGs guarantee that a $G_d$-locally optimal policy is globally optimal.
For instance, if $G_d$ has an edge set $\cE_d=\{(1,2), (3,2)\}$. When $\pi_1=1, \pi_3 =1, \pi_2(0,0)=\pi_2(1,0)=\pi_2(0,1)=0, \pi_2(1,1)=1$, the joint policy $\pi=(1,1,1)$ is $G_d$-locally optimal but not globally optimal.

\subsection{Optimality Guarantee}

There are some common criteria to determine whether a local optimum is also globally optimal, 
such as convexity of continuous functions or complete unimodality of discrete functions \cite{bauso2012team}.
However, verifying these conditions often requires prior knowledge of the problem model, which is typically impractical.
In this section, we introduce a novel condition for ensuring the global optimality of $G_d$-locally optimal policies, which is associated with the relationship between CG and ADG.
The verification of this condition requires the network structure only.

Before presenting the results, we introduce some notations. 
Let $N_c(i)$ denote the neighbors of $i$ in $G_c$.
When $i$ is replaced with a set $S$, $N_c(S)$ represents the neighbors of all agents in $S$, i.e., $N_c(S)=\left(\cup_{i\in S}N_c(i)\right) \setminus S$.
Let $i^{[+]} = \{i, i+1, \dots, n\}$ represent the set of agents with indices not less than $i$.
\begin{theorem}[Optimality of ADG, Proof in \cref{sec:proof-optimality}]\label{thm:optimal}
    Let policy $\pi$ associated with ADG $G_{d} = (\cN, \cE_{d})$ be $G_d$-locally optimal. 
    Let $G_{c} = (\cN, \cE_{c})$ be a CG of state-action value $Q^{\pi}$.
    If:
    \begin{equation}\label{eq:graph-condition}
        N_{d}(i) = N_c(i^{[+]}), \forall i\in \cN,
    \end{equation}
    then $\pi$ is globally optimal, meaning $V^{\pi}=V^{*}$.
\end{theorem}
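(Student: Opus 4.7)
The strategy is to show that $V^\pi$ satisfies the Bellman optimality equation $\max_a Q^\pi(s,a) = V^\pi(s)$, which together with $T_\pi V^\pi = V^\pi$ and the identity $Q^{V^\pi} = Q^\pi$ yields $T V^\pi = V^\pi$; uniqueness of the fixed point of the contraction $T$ then forces $V^\pi = V^*$. To establish this Bellman identity, I would run a reverse-order variable elimination on $Q^\pi$, using $G_d$-local optimality at each index $i$ to convert an expectation under $\pi_i$ into a maximum over $a_i$.

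Specifically, for each $i \in \{1,\dots,n+1\}$ define
\begin{equation*}
    U_i(s, a_{<i}) := \mE_{a_{\geq i} \sim \pi}\bigl[Q^\pi(s,a) \,\big|\, s, a_{<i}\bigr], \qquad F_i(s, a_{<i}) := \max_{a_{\geq i}} Q^\pi(s,a),
\end{equation*}
so that $U_{n+1} = F_{n+1} = Q^\pi$, $U_1(s) = V^\pi(s)$, and $F_1(s) = \max_a Q^\pi(s,a)$. The plan is to prove $U_i = F_i$ by downward induction on $i$, after first establishing the structural decomposition $U_i(s,a_{<i}) = Q^{<i}(s,a_{<i}) + \Psi_i(s, a_{N_c(i^{[+]})})$, and similarly for $F_i$, where $Q^{<i}$ collects the CG factors internal to $\{1,\dots,i-1\}$. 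Both decompositions follow from two locality facts implied by \eqref{eq:graph-condition}: (i) every $\pi_k$ with $k \geq i$ depends on $a_{<i}$ only through $a_{N_c(i^{[+]})}$, since $N_d(k)\cap\{1,\dots,i-1\} \subseteq N_c(k^{[+]}) \cap\{1,\dots,i-1\} \subseteq N_c(i^{[+]})$; and (ii) every CG factor $Q_{jk}$ with $\max(j,k) \geq i$ also depends on $a_{<i}$ only through $a_{N_c(i^{[+]})}$, directly by the definition of $N_c(i^{[+]})$.

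For the inductive step, given $U_{i+1} = F_{i+1}$, peeling the CG factors incident to $i$ off $Q^{<i+1}$ and using the inclusion $N_c((i+1)^{[+]}) \subseteq N_c(i^{[+]}) \cup \{i\}$ gives a decomposition $U_{i+1}(s,a_{<i+1}) = Q^{<i}(s,a_{<i}) + E_i\bigl(s,\, a_{N_c(i^{[+]})},\, a_i\bigr)$ for an appropriate $E_i$. Then $F_i(s,a_{<i}) = Q^{<i}(s,a_{<i}) + \max_{a_i} E_i$ by the inductive hypothesis, while $U_i(s,a_{<i}) = Q^{<i}(s,a_{<i}) + \mE_{a_i \sim \pi_i(\cdot\mid s,\, a_{N_c(i^{[+]})})}\![E_i]$. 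The remaining identity $\mE_{\pi_i}[E_i] = \max_{a_i} E_i$ is precisely what $G_d$-local optimality delivers: expanding $\mE_{\pi_{-N_d(i)}}[Q^\pi(s,a)]$ with the same CG/ADG decomposition and discarding the term independent of $a_i$, \eqref{eq:gd-locally} asserts exactly that $\pi_i(\cdot\mid s, a_{N_c(i^{[+]})})$ is supported on $\arg\max_{a_i} E_i(s, a_{N_c(i^{[+]})}, a_i)$. Taking $i = 1$ closes the induction and proves the Bellman identity.

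The most delicate step will be aligning the function that $G_d$-local optimality asserts is maximised by $\pi_i$ with the $E_i$ produced by the variable-elimination step. This alignment requires verifying, using \eqref{eq:graph-condition}, that after taking conditional expectations the $a_{<i}$-dependence on the left-hand side of \eqref{eq:gd-locally} collapses exactly onto $a_{N_c(i^{[+]})}$, and that the $a_i$-dependent part agrees with $E_i$ up to an additive term that does not depend on $a_i$. The two locality facts above are the key, but one must also handle the conditional distribution over $a_{-N_d[i]}$ carefully --- in particular, showing that marginalising an ADG policy over a prefix of agents yields a sub-policy whose Markov structure is still governed by the induced sub-DAG --- which is where the bookkeeping concentrates.
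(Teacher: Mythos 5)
Your proposal is correct and follows essentially the same route as the paper's proof: your downward induction on $U_i = F_i$ is exactly the paper's induction that $\mE_{\pi_{i^{[+]}}}[Q^{\pi}(s,a'_{i^-},a_{i^{[+]}})] = \max_{a_{i^{[+]}}}Q^{\pi}(s,a'_{i^-},a_{i^{[+]}})$, and your two locality facts together with the $Q^{<i}+E_i$ splitting are the content of the paper's Lemmas A.1--A.4 (CG edge partition, nested neighborhoods, $Q = Q_1 + Q_2$, and the reduction of the $G_d$-local-optimality argmax to an argmax over $a_i$ of a function of $(s,a_{N_d(i)},a_i)$). The "delicate alignment" you flag does go through exactly as you sketch, since $E_i$ differs from the conditional expectation appearing in \eqref{eq:gd-locally} only by a term independent of $a_i$.
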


\begin{wrapfigure}{r}{0.55\linewidth}
    \vskip -0.2in
    \centerline{\includegraphics[width=\linewidth]{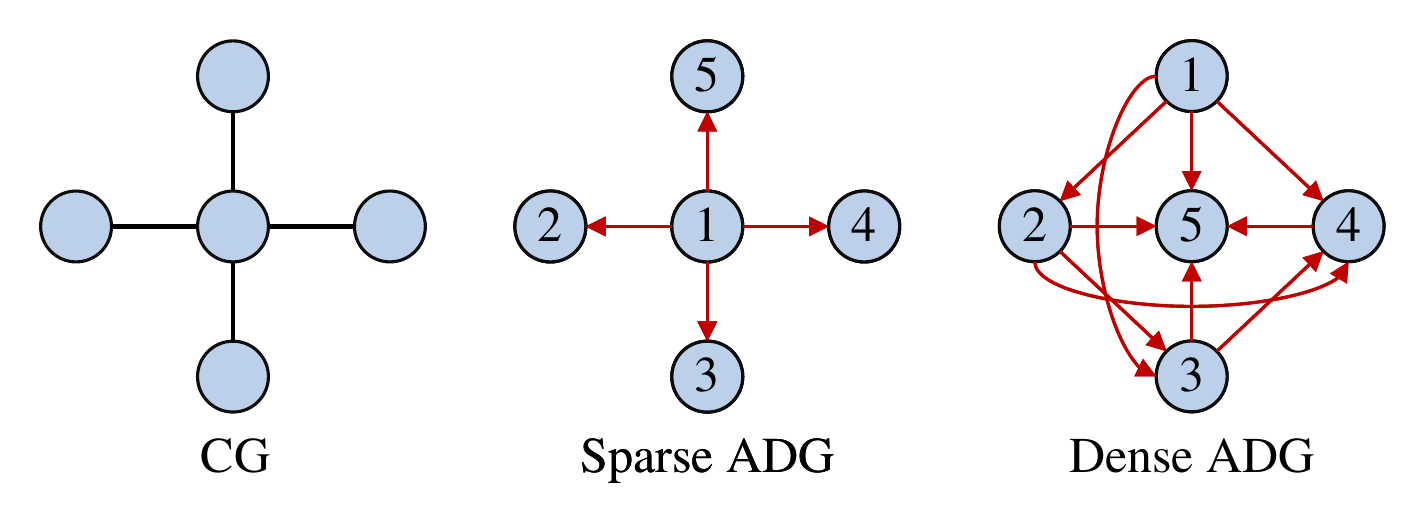}}
    \caption{
        Different index orders of agents result in different sparsity of the ADG.
    }\label{fig:index-order}
    \vskip -0.1in
\end{wrapfigure}

This theorem indicates that the ADG can be designed based on the CG, thereby ensuring that a $G_d$-locally optimal policy is also globally optimal.

\begin{remark} 
    \cref{thm:optimal} contains results in the literature as special cases. In particular, when both $G_c$ and $G_d$ are empty graphs, 
    the $Q$-function can be decomposed in the form of VDN. Consequently, any Nash equilibrium policy is globally optimal, aligning with the findings in \cite{dou2022understanding}.
    On the other hand, when $G_c$ is a complete graph and $G_d$ be a fully dense graph with edge set $\mathcal{E}_d = \{(i,j)\in\mathcal{N}\times\mathcal{N}: i < j\}$, the condition $N_{d}(i) = N_c(i^{[+]})$ is also satisfied.
    Thus, a $G_d$-locally optimal policy is guaranteed to be globally optimal for any CG, since every CG is a subgraph of a complete graph, aligning with the findings in \cite{chen2023context}.
\end{remark}

While \cref{thm:optimal} requires the availability of a CG, which may seem restrictive, 
this is reasonable in many practical scenarios where the CG can be predefined. 
Moreover, even in domains without an apparent network structure, an approximate CG can still be learned \cite{li2020deep, wangcontext}.

\subsection{Finding Sparse ADGs}


\begin{wrapfigure}{r}{0.5\linewidth}
    \vskip -0.3in
    \centerline{\includegraphics[width=\linewidth]{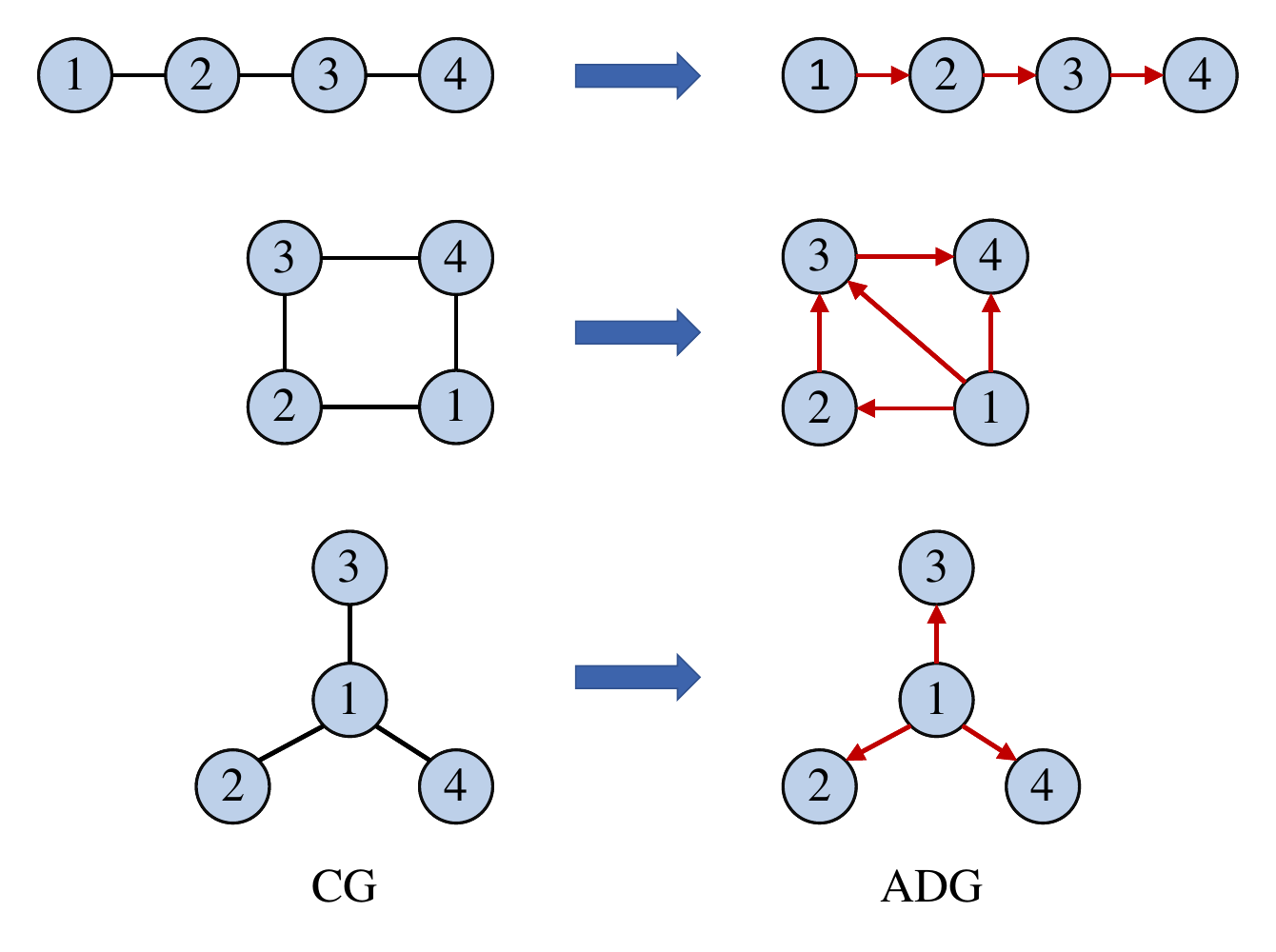}}
    \vskip -0.1in
    \caption{
        ADGs generated by Algorithm \ref{alg:greedy} for CG topologies: line, ring, and star. 
    }\label{fig:greedy-example}
    \vskip -0.15in
\end{wrapfigure}

Although a fully dense ADG $G_d$ ensures that any $G_d$-locally optimal policy achieves global optimality, training such policies can be computationally costly, limiting scalability. Thus, for sparse CGs, identifying sparse ADGs that satisfy \eqref{eq:graph-condition} is essential to enhance efficiency.

If the indices of the agents are predetermined, the ADG can be uniquely determined by the condition \eqref{eq:graph-condition}. 
However, the choice of index order (topological sort) significantly affects the sparsity of the ADG, as illustrated in \cref{fig:index-order}.
Determining the optimal index order is analogous to finding the optimal elimination order in VE algorithm, a problem known to be NP-complete \cite{kok2006collaborative}. 
Despite this complexity, practical heuristics can be employed, such as the greedy algorithm described in \cref{sec:greedy-alg} \cref{alg:greedy}. 
The ADGs generated by the greedy algorithm for some simple network topologies are illustrated in \cref{fig:greedy-example}.

\section{Action-Dependent Policy Iteration}\label{sec:MPI}

In this section, we introduce a policy iteration algorithm for MARL in a tabular setting.
This algorithm elucidates the advantage of employing action-dependent policies, 
enabling convergence to a $G_d$-locally optimal policy rather than merely an agent-by-agent optimal policy. 
Under the conditions established in \cref{thm:optimal}, it can be further concluded that our proposed policy iteration algorithm achieves convergence to the globally optimal solution.

Our approach extends the multi-agent policy iteration (MPI) framework proposed in \cite{bertsekas2021multiagent}, which decomposes the joint policy iteration step of standard policy iteration (PI) \cite{sutton2018reinforcement} into sequential updates of individual agents policies, thereby mitigating the computational complexity of PI. However, MPI only ensures convergence to an agent-by-agent optimal policy, which is frequently suboptimal.
In order to ensure the convergence of the joint policy to a $G_d$-locally optimal policy, we propose the \cref{alg:ad-mpi},
which integrates deterministic action-dependent policies into the MPI framework.

\begin{algorithm}
    \caption{Action-Dependent Multi-Agent Policy Iteration}\label{alg:ad-mpi}
    \begin{algorithmic}
        \STATE Initialize policy $\pi^0_i, i\in\mathcal{N}$ associated with the ADG $G_d$
        \FOR{$k=0,1,\ldots$}
        \STATE {\itshape // Policy Evaluation}
        \STATE Solve for $V^{\pi^k}$ using the Bellman equation $V = T_{\pi^k}V$
        \STATE Calculate $Q^{\pi^{k}}$ using $V^{\pi^k}$
        \STATE {\itshape // Policy Iteration}
        \FOR{$i=1,2,\ldots,n$}
        \STATE Denote the joint policy of $(\pi^{k+1}_1, \ldots, \pi^{k+1}_{i-1}, \pi^{k}_{i} \ldots, \pi^{k}_{n})$ by $\pi^{k,i}$
        \STATE Update policy: $\pi^{k+1}_{i}(s, a_{N_{d}(i)})\leftarrow \underset{a_{i}\in\cA_{i}}{\argmax}Q^{\pi^{k}}(s,a_{N_{d}[i]}, \pi^{k,i}_{-N_{d}[i]}(s, a_{N_{d}[i]})), \forall s, a_{N_{d}(i)}$
        \ENDFOR
        \ENDFOR
    \end{algorithmic}
\end{algorithm}

To prevent non-convergence caused by alternating policies with identical value functions, we introduce the \cref{asp:stopping-criterion}. In practical implementations, this assumption can be relaxed by checking whether $\pi^{k}_{i}$ already achieves the maximum before updating $\pi^{k+1}_{i}$.
If $\pi^{k}_{i}$ achieves the maximum, then $\pi^{k+1}_{i}$ is set to $\pi^{k}_{i}$ to ensure stability. 
The convergence of \cref{alg:ad-mpi} is established in \cref{thm:convergence}.

\begin{assumption}\label{asp:stopping-criterion}
    In \cref{alg:ad-mpi}, the set $\underset{a_{i}\in\cA_{i}}{\argmax}Q^{\pi^{k}}\left(s,a_{N_{d}[i]}, \pi^{k,i}_{-N_{d}[i]}(s, a_{N_{d}[i]})\right)$
    is a singleton set for all $k=1,2,\ldots$ and $i\in\mathcal{N}$.
\end{assumption}

\begin{theorem}[Convergence of \cref{alg:ad-mpi}, proof in \cref{sec:proof-convergence}]\label{thm:convergence}
    Assume $G_{c} = (\cN, \cE_{c})$ is a CG of state-action value $Q^{\pi}$ for all policies $\pi$, and \cref{asp:stopping-criterion} holds. 
    Let $\{\pi^k\}$ be a sequence of joint policies generated by \cref{alg:ad-mpi}, associated with an ADG $G_{d} = (\cN, \cE_{d})$.
    If the condition \eqref{eq:graph-condition} is satisfied,
    then $\{\pi^k\}$ converges to a globally optimal policy in finite terms.
\end{theorem}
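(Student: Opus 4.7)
The plan is to combine monotone policy improvement, finite termination, and \cref{thm:optimal}. First I would prove $V^{\pi^{k+1}}\ge V^{\pi^{k}}$ pointwise for every outer iteration $k$; then use \cref{asp:stopping-criterion} together with the finiteness of the deterministic policy class to conclude that the sequence stabilizes in finitely many steps; and finally identify the terminal policy as $G_d$-locally optimal, so that \cref{thm:optimal} yields global optimality.

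\textbf{Step 1 (Monotonic improvement via the chain of inner updates).} Let $\pi^{k,i}$ be the intermediate joint policies defined in \cref{alg:ad-mpi}, so that $\pi^{k,1}=\pi^{k}$, $\pi^{k,n+1}=\pi^{k+1}$, and $\pi^{k,i+1}$ differs from $\pi^{k,i}$ only in agent $i$'s component. Denote by $\bar a^{(i)}(s)$ the deterministic joint action produced by $\pi^{k,i}$ starting at $s$, computed sequentially along the topological order of $G_d$. Since agents $j<i$ use $\pi^{k+1}_j$ under both $\pi^{k,i}$ and $\pi^{k,i+1}$ while agents $j>i$ use $\pi^{k}_j$ under both, $\bar a^{(i)}$ and $\bar a^{(i+1)}$ share the same input $\bar a_{N_d(i)}(s)$ for agent $i$, and one verifies
\begin{equation*}
Q^{\pi^{k}}\bigl(s,\bar a^{(i+1)}(s)\bigr)=Q^{\pi^{k}}\Bigl(s,\bar a_{N_d(i)}(s),\pi^{k+1}_i(s,\bar a_{N_d(i)}(s)),\pi^{k,i}_{-N_d[i]}(s,\cdot)\Bigr),
\end{equation*}
with the analogous identity for $\bar a^{(i)}(s)$ (replacing $\pi^{k+1}_i$ by $\pi^{k}_i$). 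The $\argmax$ defining $\pi^{k+1}_i$ immediately gives $Q^{\pi^{k}}(s,\bar a^{(i+1)}(s))\ge Q^{\pi^{k}}(s,\bar a^{(i)}(s))$. Telescoping over $i$ yields $Q^{\pi^{k}}(s,\pi^{k+1}(s))\ge V^{\pi^{k}}(s)$, and the classical Policy Improvement Theorem delivers $V^{\pi^{k+1}}\ge V^{\pi^{k}}$.

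\textbf{Step 2 (Finite termination and fixed-point characterization).} Since the deterministic policy class is finite, $\{V^{\pi^{k}}\}$ stabilizes after finitely many outer iterations. Once $V^{\pi^{k+1}}=V^{\pi^{k}}$, the Bellman equations give $Q^{\pi^{k+1}}=Q^{\pi^{k}}$ and the chain of inequalities in Step~1 collapses to equalities; \cref{asp:stopping-criterion} (uniqueness of the inner $\argmax$) then forces $\pi^{k+1}_i=\pi^{k}_i$ for every $i$, so $\pi^{k+1}=\pi^{k}=:\pi^\star$ and the inner update rewrites as
\begin{equation*}
\pi^\star_i(s,a_{N_d(i)})=\argmax_{a_i\in\cA_i}Q^{\pi^\star}\bigl(s,a_{N_d[i]},\pi^\star_{-N_d[i]}(s,a_{N_d[i]})\bigr),\quad\forall\, i,\,s,\,a_{N_d(i)}.
\end{equation*}

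\textbf{Step 3 (Global optimality).} This fixed-point equation is precisely the deterministic form of the $G_d$-local optimality condition~\eqref{eq:gd-locally}. Because $G_c$ is a CG of $Q^{\pi^\star}$ and the hypothesis $N_d(i)=N_c(i^{[+]})$ holds, \cref{thm:optimal} concludes $V^{\pi^\star}=V^\star$. The main obstacle I anticipate is the bookkeeping in Step~1: identifying $\pi^{k,i+1}(s)$ with the argument of the algorithm's $\argmax$. The crucial observation is that the downstream block $\pi^{k,i}_{-N_d[i]}(s,\cdot)$ evaluated at $\bar a_{N_d[i]}^{(i+1)}(s)$ coincides with the corresponding sub-vector of $\bar a^{(i+1)}(s)$, because both are obtained by running $\pi^{k+1}_j$ for $j<i$ and $\pi^{k}_j$ for $j>i$ along the same topological order, with the only genuine change being the propagation of the updated $a_i$ through the ADG.
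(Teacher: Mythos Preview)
Your Steps 1 and 3 follow the paper's strategy (the monotone-improvement chain is essentially the paper's \cref{lem:convergence-V}, and the final appeal is to \cref{thm:optimal}). The gap is in Step 2. From the collapse of the inequality chain you only obtain \emph{on-path} agreement: at the particular input $\hat a_{N_d(i)}(s)$ realized by running $\pi^{k+1}_1,\dots,\pi^{k+1}_{i-1}$ from state $s$, the old choice $\pi^{k}_i(s,\hat a_{N_d(i)}(s))$ also attains the maximum, so \cref{asp:stopping-criterion} yields $\pi^{k+1}_i(s,\hat a_{N_d(i)}(s))=\pi^{k}_i(s,\hat a_{N_d(i)}(s))$. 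This establishes $\pi^{k+1}(s)=\pi^{k}(s)$ as joint-action maps $\cS\to\cA$, but \emph{not} that $\pi^{k+1}_i=\pi^{k}_i$ as functions of $(s,a_{N_d(i)})$. Yet your displayed fixed-point equation, and the $G_d$-local optimality hypothesis of \cref{thm:optimal}, are off-path conditions quantified over all $a_{N_d(i)}$. Since the inner $\argmax$ defining $\pi^{k+1}_i(s,a_{N_d(i)})$ involves the downstream block $\pi^{k}_{j}$ (for $j>i$) evaluated at off-path inputs $a_{N_d[i]}$, the individual components can keep changing off-path even after the joint-action map has stabilized; nothing in your argument rules this out.

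Notice that your Steps 1--2 never use the hypothesis that $G_c$ is a CG of $Q^{\pi}$---a red flag, since this assumption is what distinguishes the theorem from a mere Nash-equilibrium statement. The paper uses it precisely here: once $V^{\pi^{k}}$ has stabilized at $V^{\circ}$, the decomposition $Q^{V^{\circ}}(s,a)=Q_1(s,a_{N_d(i)},a_{i^{[+]}})+Q_2(s,a_{i^-})$ implied by the CG together with condition~\eqref{eq:graph-condition} shows that the inner $\argmax$ for agent $i$ is independent of $a_{i^-\setminus N_d(i)}$, hence of the upstream policies $\pi^{m}_{j}$ for $j<i$, $j\notin N_d(i)$. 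A backward induction from $i=n$ down to $i=1$ (fixing the already-converged downstream block at each step) then pins down each $\pi^{m}_i$ as a fixed function of $(s,a_{N_d(i)})$ for all large $m$, delivering the off-path fixed-point equation you need before invoking \cref{thm:optimal}.
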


\begin{remark}
\cref{thm:convergence} assumes that $G_c$ serves as the CG of $Q^{\pi}$ across all policies. 
However, this condition can be relaxed to require only $G_c$ to act as the CG of the value function of the converged policy. 
The detailed proof of this relaxation is provided in \cref{thm:convergence-original}.
Additionally, in \cref{alg:ad-mpi}, while the convergence of all individual policies ensures that the resulting joint policy is $G_d$-locally optimal,
the convergence is not always guaranteed if \eqref{eq:graph-condition} is not satisfied.
A similar issue has been encountered in \cite{chen2023context}, and we leave it as our future research topic.
\end{remark}

\section{Practical Algorithms}\label{sec:Practical Algorithm}

In this section, we introduce how to integrate the ADG into existing SOTA algorithms. 
Unlike prior CG-based approaches \cite{bohmer2020deep, li2020deep}, which face problems in representing individual policies and rely on complex techniques such Max-Plus to compute optimal Q values, 
action-dependent policies exploit the structure of CGs while retaining the flexibility of independent policies. 
This facilitates seamless and versatile integration with a broad range of SOTA algorithms, rendering our theoretical framework applicable to general scenarios encompassing continuous state-action spaces and partial observability.

\subsection{Policy-Based Methods}

Policy-based methods optimize parameterized policies directly.
To leverage our framework, we transform the independent policy $\pi_{\theta_i}(a_i|s)$ into an action-dependent form $\pi_{\theta_i}(a_i|s, a_{N_{d}(i)})$.
This necessitates a corresponding adjustment to the optimization objective to accommodate the action-dependent policy.
For instance, consider MAPPO \cite{yu2103surprising}, where the original objective is:
\begin{equation}
    \mathcal{L}(\theta)=\mathbb{E}_{s\sim\cD,a\sim\pi_{\theta_{\mathrm{old}}}}
    \Bigg[\sum_{i=1}^{n}\min\left(r_{\theta_i}(a_i, s) A_{\pi_{\theta_{\mathrm{old}}}}(s,a), \right.\nonumber
    \left. \mathrm{clip}(r_{\theta_i}(a_i, s),1\pm\varepsilon)A_{\pi_{\theta_{\mathrm{old}}}}(s,a)\right)\Bigg],
\end{equation}
where $r_{\theta_i}(a_i, s) = \frac{\pi_{\theta_i}(a_i|s)}{\pi_{\theta_{i,\mathrm{old}}}(a_i|s)}$, $\cD$ denotes the distribution of replay buffer.
To adapt this objective for action-dependent policies, we replace $r_{\theta_i}(a_i, s)$ with
$r_{\theta_i}(a_i, s, a_{N_{d}(i)}) = \frac{\pi_{\theta_i}(a_i|s, a_{N_{d}(i)})}{\pi_{\theta_{i,\mathrm{old}}}(a_i|s, a_{N_{d}(i)})}$.



\subsection{Value-Based Methods}

Value-based methods implicitly represent policies through individual value functions $Q_i(s, a_i)$. 
To incorporate ADGs, we reformulate these into an action-dependent form $Q_i(s, a_i, a_{N_{d}(i)})$, enabling the derivation of corresponding action-dependent policies.

For example, 
in QMIX \cite{rashid2020monotonic}, we enforce a monotonicity constraint on the mixing network, ensuring:
\begin{equation}
    \frac{\partial Q(s, a)}{\partial Q_i(s, a_{N_{d}(i)}, a_i)} \geq 0, \quad \forall i\in\cN, a\in\cA, s\in\cS.
\end{equation}
To obtain the greedy action $\arg\max_a Q(s, a)$, we compute $\arg\max_{a_i} Q_i(s, a_i, a_{N_{d}(i)})$ sequentially for each agent $i = 1, \ldots, n$,
leveraging the structural dependencies encoded by the ADG.

\section{Experiments}

To validate the theoretical results, we conduct an experimental evaluation of Algorithm 2 on coordination polymatrix games across diverse topological configurations.
We will compare the performance corresponding to three types of ADGs, namely sparse ADG (satisfying condition \eqref{eq:graph-condition}), fully dense ADG and empty ADG.
To further explore the practical applicability of our method, we integrate ADGs into practical MARL algorithms,
and evaluate their performance in more complex scenarios, including adaptive traffic signal control (ATSC) and StarCraft II. 
Detailed experimental protocols and hyperparameter settings are provided in \cref{sec:details}.

\subsection{Coordination Polymatrix Games}

\begin{figure}[htb]
    \vskip -0.1in
    \centerline{\includegraphics[width=\linewidth]{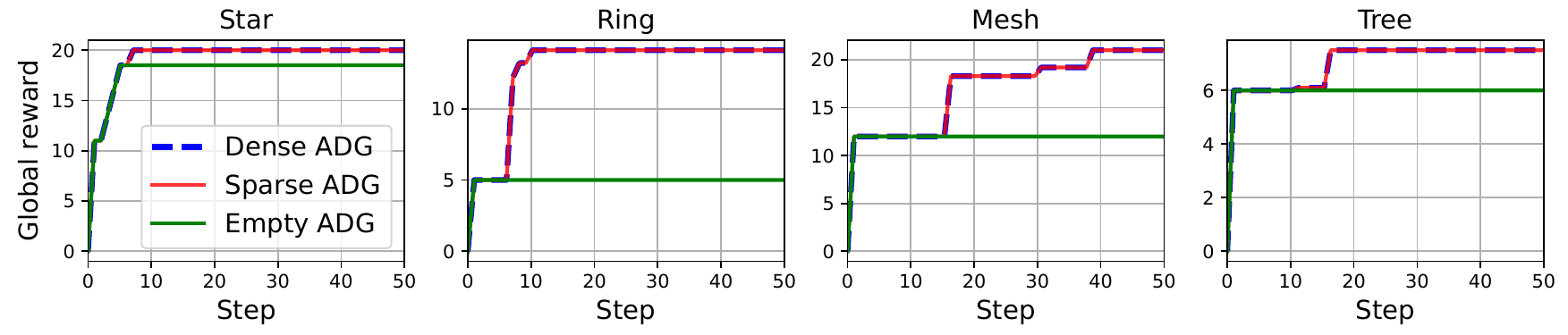}}
    \caption{Results of coordination polymatrix game.}\label{fig:ADGPL}
    \vskip -0.1in
\end{figure}

The experimental setup for the coordination polymatrix game is detailed in \cref{sec:Polymatrix}. 
We implement Algorithm 2 and evaluate its performance under various topological structures: star, ring, tree, and mesh. 
The specific CG topologies, corresponding sparse ADGs, and payoff matrices are elaborated in \cref{sec:details}.
The payoff matrices are meticulously designed to induce multiple Nash equilibria, creating a challenging environment that tests the ability to achieve optimal cooperative outcomes.

\cref{fig:ADGPL} demonstrates that both the sparse ADGs and the dense ADGs lead to globally optimal outcomes, while policies with empty ADGs frequently converge to suboptimal Nash equilibria. In fact, each learning step of the dense ADG takes more time than the sparse ADG since redundant action-dependencies are considered. These results underscore the critical role of a well-designed ADG in achieving global optimality with low computation costs.

\subsection{ATSC}

\begin{wrapfigure}{r}{0.28\linewidth}
    \vskip -0.3in
    \centering
    \includegraphics[width=0.9\linewidth]{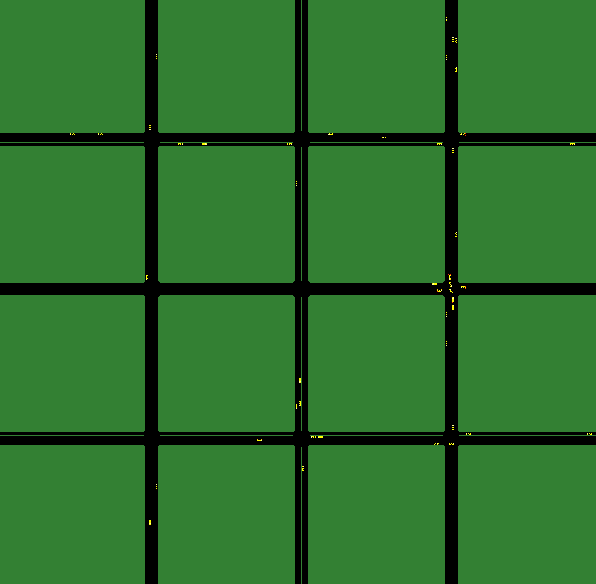}
    \caption{3x3 road network.}
    \vskip -0.2in
\end{wrapfigure}

ATSC is a benchmark with a clear cooperation structure. We perform ATSC experiments using the Simulation of Urban Mobility (SUMO) platform \cite{SUMO2018}, 
with the objective of optimizing traffic signal operations to enhance vehicular flow.
Operating under partial observability, each of nine traffic signals accesses local intersection data in a 3x3 road network provided by SUMO-RL \cite{sumorl}, with the reward defined as the negative aggregate pressure (difference between entering and exiting vehicles).
The CG is constructed based on adjacency relationships between intersections, and the sparse ADG is derived using \cref{alg:greedy}. 

\begin{wrapfigure}{l}{0.6\linewidth}
    \vskip -0.1in
    \centering
    \includegraphics[width=\linewidth]{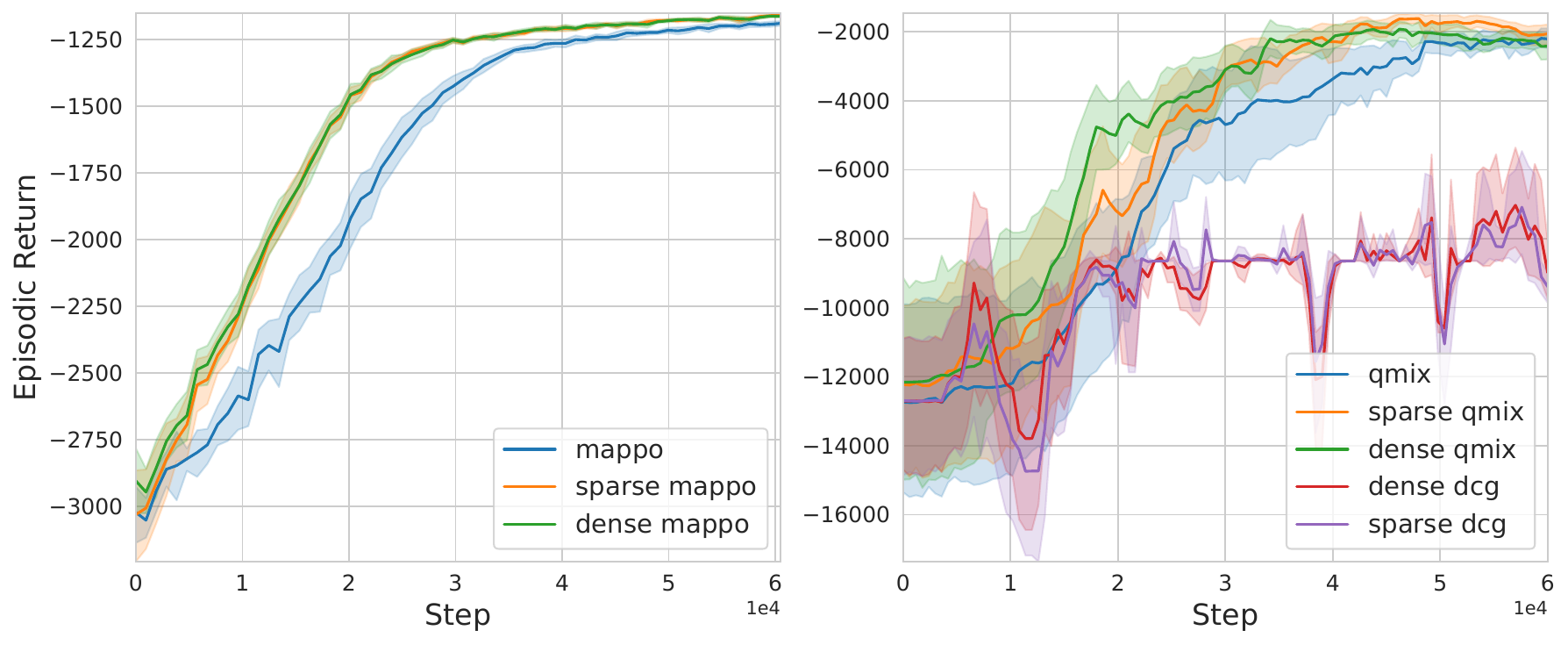}
    \caption{Results of ATSC.}\label{fig:qmix-mappo-sumo}
    \vskip -0.1in
\end{wrapfigure}

We augment the QMIX and MAPPO algorithms with ADG and compare their performance with the Deep Coordination Graph (DCG) algorithm \cite{bohmer2020deep} across ten independent experiments with distinct random seeds, as reported in \cref{fig:qmix-mappo-sumo}. 
The CG for dense DCG is a complete graph, whereas sparse DCG employs the same CG as sparse ADG. 
Both sparse and dense ADGs exhibit a marginal performance advantage over empty ADGs, coupled with faster convergence. 
The limited performance gap may be attributed to the partial observability setting, which constrains agents' ability to fully exploit action dependencies due to the restricted observational scope. 
Nevertheless, the consistent convergence behavior of sparse and dense ADGs corroborates the findings from the polymatrix game experiments. 
Conversely, DCG exhibits hyperparameter sensitivity and convergence difficulties under a restricted observational scope.


\subsection{StarCraft II}

\begin{wrapfigure}{r}{0.3\linewidth}
    \vskip -0.2in
    \centering
    \includegraphics[width=\linewidth]{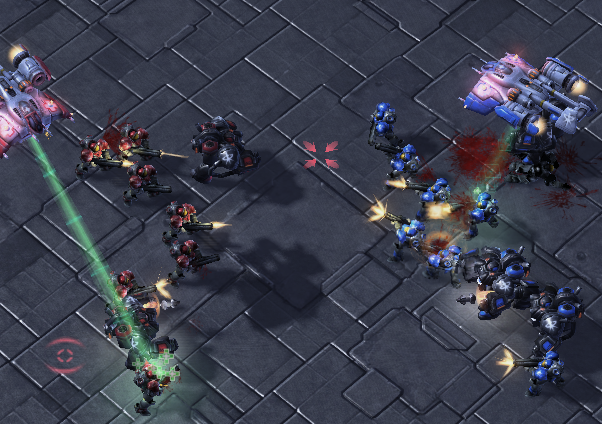}
    \caption{SMAC MMM2}
    \vskip -0.1in
\end{wrapfigure}

We adopt the SMAC \cite{samvelyan19smac} benchmark to evaluate the performance of our method in complex environments.
Recognizing the difficulty of inferring fixed CGs in StarCraft II, we introduce an artificially constructed CG and derive the corresponding ADG, as illustrated in \cref{fig:cg-adg} in appendix. 
We train the QMIX algorithm augmented with ADG and compare its performance with the DCG algorithm, which leverages the same artificial CG. 
This comparison elucidates the relative strengths of ADG-based methods versus existing CG-based methods. We conduct ten experiments with distinct seeds on the super hard map MMM2, with results reported in \cref{fig:sc2}. 

\begin{wrapfigure}{l}{0.35\linewidth}
    \vskip -0.1in
    \centering
    \includegraphics[width=\linewidth]{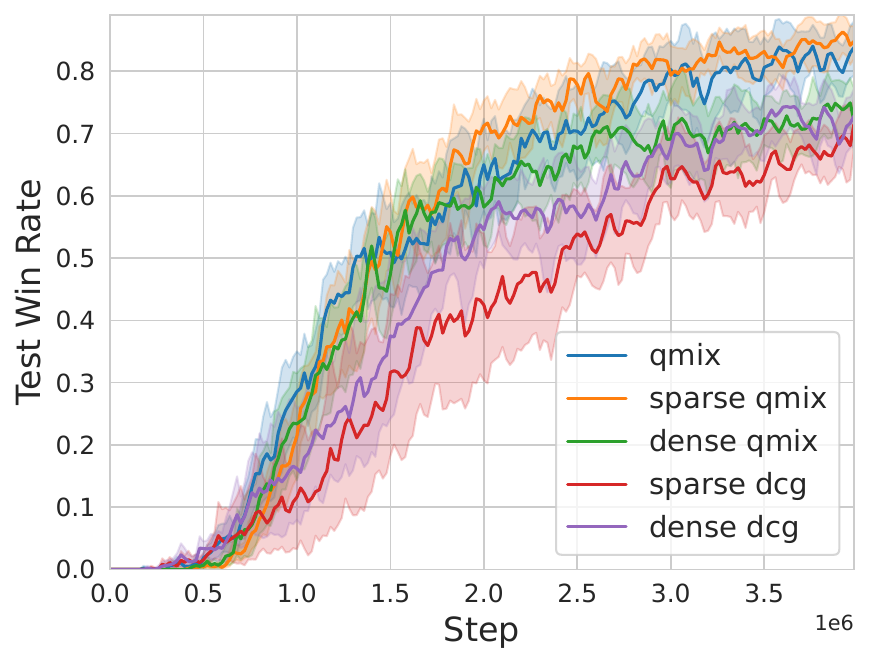}
    \vskip -0.1in
    \caption{Results of SMAC.}\label{fig:sc2}
    \vskip -0.1in
\end{wrapfigure}

The results reveal that QMIX with a sparse ADG outperforms the baseline QMIX algorithm, whereas QMIX with a dense ADG exhibits inferior performance. 
This suggests that dense ADG policies may require a greater volume of samples to achieve comparable performance. For DCG, the dense variant (using a complete CG) performs comparably to dense ADG, whereas the sparse variant (using the same CG as sparse ADG) underperforms. 
The suboptimal performance of sparse DCG may be attributable to the max-plus algorithm, which suffers from reduced accuracy as the number of CG edges decreases. In contrast, our ADG-based method circumvents reliance on the max-plus algorithm, mitigating this limitation.



\section{Conclusion}

    In this paper, we introduced a novel theoretical framework for MARL with action-dependent policies and CGs.
    We rigorously proved that a $G_d$-locally optimal policy attains global optimality when the ADG meets CG-defined conditions under finite state and action spaces.
    Furthermore, by embedding our theory within the SOTA MARL algorithms, we provided empirical evidence on its effectiveness in practical scenarios. 
    Recognizing that complex environments may involve unknown CGs, dynamic CGs, hypergraph CGs, we aim to explore the adaptability and potential of ADGs in these challenging settings in future research.

\nocite{*}
\bibliographystyle{unsrtnat}
\bibliography{reference}

\begin{thebibliography}{51}
\providecommand{\natexlab}[1]{#1}
\providecommand{\url}[1]{\texttt{#1}}
\expandafter\ifx\csname urlstyle\endcsname\relax
  \providecommand{\doi}[1]{doi: #1}\else
  \providecommand{\doi}{doi: \begingroup \urlstyle{rm}\Url}\fi

\bibitem[Zhang et~al.(2021)Zhang, Yang, and Ba{\c{s}}ar]{zhang2021multi}
Kaiqing Zhang, Zhuoran Yang, and Tamer Ba{\c{s}}ar.
\newblock Multi-agent reinforcement learning: A selective overview of theories and algorithms.
\newblock \emph{Handbook of Reinforcement Learning and Control}, pages 321--384, 2021.

\bibitem[Oroojlooy and Hajinezhad(2023)]{oroojlooy2023review}
Afshin Oroojlooy and Davood Hajinezhad.
\newblock A review of cooperative multi-agent deep reinforcement learning.
\newblock \emph{Applied Intelligence}, 53\penalty0 (11):\penalty0 13677--13722, 2023.

\bibitem[Fu et~al.(2022)Fu, Yu, Xu, Yang, and Wu]{fu2022revisiting}
Wei Fu, Chao Yu, Zelai Xu, Jiaqi Yang, and Yi~Wu.
\newblock Revisiting some common practices in cooperative multi-agent reinforcement learning.
\newblock In \emph{International Conference on Machine Learning}, pages 6863--6877. PMLR, 2022.

\bibitem[Ye et~al.(2022)Ye, Li, Wang, Zhao, and Zhang]{ye2022towards}
Jianing Ye, Chenghao Li, Jianhao Wang, Qianchuan Zhao, and Chongjie Zhang.
\newblock Towards global optimality in cooperative marl with sequential transformation.
\newblock 2022.

\bibitem[Jing et~al.(2024)Jing, Bai, George, Chakrabortty, and Sharma]{jing2024distributed}
Gangshan Jing, He~Bai, Jemin George, Aranya Chakrabortty, and Piyush~K. Sharma.
\newblock Distributed multiagent reinforcement learning based on graph-induced local value functions.
\newblock \emph{IEEE Transactions on Automatic Control}, 69\penalty0 (10):\penalty0 6636--6651, 2024.

\bibitem[Bertsekas(2021)]{bertsekas2021multiagent}
Dimitri Bertsekas.
\newblock Multiagent reinforcement learning: Rollout and policy iteration.
\newblock \emph{IEEE/CAA Journal of Automatica Sinica}, 8\penalty0 (2):\penalty0 249--272, 2021.

\bibitem[Chen and Zhang(2023)]{chen2023context}
Dingyang Chen and Qi~Zhang.
\newblock Context-aware bayesian network actor-critic methods for cooperative multi-agent reinforcement learning.
\newblock In \emph{International Conference on Machine Learning}, pages 5327--5350. PMLR, 2023.

\bibitem[Guestrin et~al.(2002)Guestrin, Lagoudakis, and Parr]{guestrin2002coordinated}
Carlos Guestrin, Michail~G Lagoudakis, and Ronald Parr.
\newblock Coordinated reinforcement learning.
\newblock In \emph{Proceedings of the 19th International Conference on Machine Learning}, pages 227--234, 2002.

\bibitem[Wang et~al.(2022{\natexlab{a}})Wang, Ye, and Lu]{wang2022more}
Jiangxing Wang, Deheng Ye, and Zongqing Lu.
\newblock More centralized training, still decentralized execution: Multi-agent conditional policy factorization.
\newblock \emph{arXiv preprint arXiv:2209.12681}, 2022{\natexlab{a}}.

\bibitem[Tan(1993)]{tan1993multi}
Ming Tan.
\newblock Multi-agent reinforcement learning: Independent vs. cooperative agents.
\newblock In \emph{Proceedings of the 10th International Conference on Machine Learning}, pages 330--337, 1993.

\bibitem[Sunehag et~al.(2018)Sunehag, Lever, Gruslys, Czarnecki, Zambaldi, Jaderberg, Lanctot, Sonnerat, Leibo, Tuyls, et~al.]{sunehag2017value}
Peter Sunehag, Guy Lever, Audrunas Gruslys, Wojciech~Marian Czarnecki, Vinicius Zambaldi, Max Jaderberg, Marc Lanctot, Nicolas Sonnerat, Joel~Z Leibo, Karl Tuyls, et~al.
\newblock Value-decomposition networks for cooperative multi-agent learning based on team reward.
\newblock In \emph{Proceedings of the 17th International Conference on Autonomous Agents and MultiAgent Systems}, pages 2085--2087, 2018.

\bibitem[Rashid et~al.(2018)Rashid, Samvelyan, Schroeder, Farquhar, Foerster, and Whiteson]{rashid2020monotonic}
Tabish Rashid, Mikayel Samvelyan, Christian Schroeder, Gregory Farquhar, Jakob Foerster, and Shimon Whiteson.
\newblock Qmix: Monotonic value function factorisation for deep multi-agent reinforcement learning.
\newblock In \emph{International Conference on Machine Learning}, pages 4295--4304. PMLR, 2018.

\bibitem[Son et~al.(2019)Son, Kim, Kang, Hostallero, and Yi]{son2019qtran}
Kyunghwan Son, Daewoo Kim, Wan~Ju Kang, David~Earl Hostallero, and Yung Yi.
\newblock Qtran: Learning to factorize with transformation for cooperative multi-agent reinforcement learning.
\newblock In \emph{International Conference on Machine Learning}, pages 5887--5896. PMLR, 2019.

\bibitem[Lowe et~al.(2017)Lowe, Wu, Tamar, Harb, Pieter~Abbeel, and Mordatch]{lowe2017multi}
Ryan Lowe, Yi~I Wu, Aviv Tamar, Jean Harb, OpenAI Pieter~Abbeel, and Igor Mordatch.
\newblock Multi-agent actor-critic for mixed cooperative-competitive environments.
\newblock \emph{Advances in Neural Information Processing Systems}, 30, 2017.

\bibitem[Foerster et~al.(2018)Foerster, Farquhar, Afouras, Nardelli, and Whiteson]{foerster2018counterfactual}
Jakob Foerster, Gregory Farquhar, Triantafyllos Afouras, Nantas Nardelli, and Shimon Whiteson.
\newblock Counterfactual multi-agent policy gradients.
\newblock In \emph{Proceedings of the AAAI Conference on Artificial Intelligence}, volume~32, 2018.

\bibitem[Iqbal and Sha(2019)]{iqbal2019actor}
Shariq Iqbal and Fei Sha.
\newblock Actor-attention-critic for multi-agent reinforcement learning.
\newblock In \emph{International Conference on Machine Learning}, pages 2961--2970. PMLR, 2019.

\bibitem[Yu et~al.(2022)Yu, Velu, Vinitsky, Gao, Wang, Bayen, and Wu]{yu2103surprising}
Chao Yu, Akash Velu, Eugene Vinitsky, Jiaxuan Gao, Yu~Wang, Alexandre Bayen, and Yi~Wu.
\newblock The surprising effectiveness of ppo in cooperative multi-agent games.
\newblock \emph{Advances in Neural Information Processing Systems}, 35:\penalty0 24611--24624, 2022.

\bibitem[B{\"o}hmer et~al.(2020)B{\"o}hmer, Kurin, and Whiteson]{bohmer2020deep}
Wendelin B{\"o}hmer, Vitaly Kurin, and Shimon Whiteson.
\newblock Deep coordination graphs.
\newblock In \emph{International Conference on Machine Learning}, pages 980--991. PMLR, 2020.

\bibitem[Li et~al.(2021)Li, Gupta, Morales, Allen, and Kochenderfer]{li2020deep}
Sheng Li, Jayesh~K Gupta, Peter Morales, Ross Allen, and Mykel~J Kochenderfer.
\newblock Deep implicit coordination graphs for multi-agent reinforcement learning.
\newblock In \emph{Proceedings of the 20th International Conference on Autonomous Agents and MultiAgent Systems}, pages 764--772, 2021.

\bibitem[Wang et~al.(2022{\natexlab{b}})Wang, Zeng, Dong, Yang, Yu, and Zhang]{wangcontext}
Tonghan Wang, Liang Zeng, Weijun Dong, Qianlan Yang, Yang Yu, and Chongjie Zhang.
\newblock Context-aware sparse deep coordination graphs.
\newblock In \emph{International Conference on Learning Representations}, 2022{\natexlab{b}}.

\bibitem[Castellini et~al.(2021)Castellini, Oliehoek, Savani, and Whiteson]{castellini2021analysing}
Jacopo Castellini, Frans~A Oliehoek, Rahul Savani, and Shimon Whiteson.
\newblock Analysing factorizations of action-value networks for cooperative multi-agent reinforcement learning.
\newblock \emph{Autonomous Agents and Multi-Agent Systems}, 35\penalty0 (2):\penalty0 25, 2021.

\bibitem[Panait et~al.(2006)Panait, Luke, and Wiegand]{panait2006biasing}
Liviu Panait, Sean Luke, and R~Paul Wiegand.
\newblock Biasing coevolutionary search for optimal multiagent behaviors.
\newblock \emph{IEEE Transactions on Evolutionary Computation}, 10\penalty0 (6):\penalty0 629--645, 2006.

\bibitem[Rogers et~al.(2011)Rogers, Farinelli, Stranders, and Jennings]{rogers2011bounded}
Alex Rogers, Alessandro Farinelli, Ruben Stranders, and Nicholas~R Jennings.
\newblock Bounded approximate decentralised coordination via the max-sum algorithm.
\newblock \emph{Artificial Intelligence}, 175\penalty0 (2):\penalty0 730--759, 2011.

\bibitem[Bertele and Brioschi(1972)]{bertele1972nonserial}
Umberto Bertele and Francesco Brioschi.
\newblock \emph{Nonserial dynamic programming}.
\newblock Academic Press, Inc., 1972.

\bibitem[Ruan et~al.(2022)Ruan, Du, Xiong, Xing, Li, Meng, Zhang, Wang, and Xu]{ruan2022gcs}
Jingqing Ruan, Yali Du, Xuantang Xiong, Dengpeng Xing, Xiyun Li, Linghui Meng, Haifeng Zhang, Jun Wang, and Bo~Xu.
\newblock Gcs: Graph-based coordination strategy for multi-agent reinforcement learning.
\newblock In \emph{Proceedings of the 21st International Conference on Autonomous Agents and Multiagent Systems}, pages 1128--1136, 2022.

\bibitem[Li et~al.(2023)Li, Liu, Zhang, Wei, Niu, Yang, Liu, and Ouyang]{li2023ace}
Chuming Li, Jie Liu, Yinmin Zhang, Yuhong Wei, Yazhe Niu, Yaodong Yang, Yu~Liu, and Wanli Ouyang.
\newblock Ace: Cooperative multi-agent q-learning with bidirectional action-dependency.
\newblock In \emph{Proceedings of the AAAI Conference on Artificial Intelligence}, volume~37, pages 8536--8544, 2023.

\bibitem[Li et~al.(2024)Li, Zhao, Wu, and Pajarinen]{li2024backpropagation}
Zhiyuan Li, Wenshuai Zhao, Lijun Wu, and Joni Pajarinen.
\newblock Backpropagation through agents.
\newblock In \emph{Proceedings of the AAAI Conference on Artificial Intelligence}, pages 13718--13726, 2024.

\bibitem[Duan et~al.(2024)Duan, Lu, and Xuan]{duan2024group}
Wei Duan, Jie Lu, and Junyu Xuan.
\newblock Group-aware coordination graph for multi-agent reinforcement learning.
\newblock In \emph{Proceedings of the Thirty-Third International Joint Conference on Artificial Intelligence}, pages 3926--3934, 2024.

\bibitem[Zhou et~al.(2023)Zhou, Liu, Qing, Chen, Zheng, Huang, Song, and Song]{zhou2023centralized}
Yihe Zhou, Shunyu Liu, Yunpeng Qing, Kaixuan Chen, Tongya Zheng, Yanhao Huang, Jie Song, and Mingli Song.
\newblock Is centralized training with decentralized execution framework centralized enough for marl?
\newblock \emph{arXiv preprint arXiv:2305.17352}, 2023.

\bibitem[Wen et~al.(2022)Wen, Kuba, Lin, Zhang, Wen, Wang, and Yang]{wen2022multi}
Muning Wen, Jakub Kuba, Runji Lin, Weinan Zhang, Ying Wen, Jun Wang, and Yaodong Yang.
\newblock Multi-agent reinforcement learning is a sequence modeling problem.
\newblock \emph{Advances in Neural Information Processing Systems}, 35:\penalty0 16509--16521, 2022.

\bibitem[Zhang and Lesser(2011)]{zhang2011coordinated}
Chongjie Zhang and Victor Lesser.
\newblock Coordinated multi-agent reinforcement learning in networked distributed {POMDPs}.
\newblock In \emph{Proceedings of the AAAI Conference on Artificial Intelligence}, volume~25, pages 764--770, 2011.

\bibitem[Bargiacchi et~al.(2018)Bargiacchi, Verstraeten, Roijers, Now{\'e}, and Hasselt]{bargiacchi2018learning}
Eugenio Bargiacchi, Timothy Verstraeten, Diederik Roijers, Ann Now{\'e}, and Hado Hasselt.
\newblock Learning to coordinate with coordination graphs in repeated single-stage multi-agent decision problems.
\newblock In \emph{International Conference on Machine Learning}, pages 482--490. PMLR, 2018.

\bibitem[Bouton et~al.(2021)Bouton, Farooq, Forgeat, Bothe, Shirazipour, and Karlsson]{bouton2021coordinated}
Maxime Bouton, Hasan Farooq, Julien Forgeat, Shruti Bothe, Meral Shirazipour, and Per Karlsson.
\newblock Coordinated reinforcement learning for optimizing mobile networks.
\newblock \emph{arXiv preprint arXiv:2109.15175}, 2021.

\bibitem[Dou et~al.(2022)Dou, Kuba, and Yang]{dou2022understanding}
Zehao Dou, Jakub~Grudzien Kuba, and Yaodong Yang.
\newblock Understanding value decomposition algorithms in deep cooperative multi-agent reinforcement learning.
\newblock \emph{arXiv preprint arXiv:2202.04868}, 2022.

\bibitem[Zhang et~al.(2022)Zhang, Mei, Dai, Schuurmans, and Li]{zhang2022global}
Runyu Zhang, Jincheng Mei, Bo~Dai, Dale Schuurmans, and Na~Li.
\newblock On the global convergence rates of decentralized softmax gradient play in markov potential games.
\newblock \emph{Advances in Neural Information Processing Systems}, 35:\penalty0 1923--1935, 2022.

\bibitem[Kuba et~al.(2022{\natexlab{a}})Kuba, Chen, Wen, Wen, Sun, Wang, and Yang]{kuba2022trust}
JG~Kuba, R~Chen, M~Wen, Y~Wen, F~Sun, J~Wang, and Y~Yang.
\newblock Trust region policy optimisation in multi-agent reinforcement learning.
\newblock In \emph{International Conference on Learning Representations}, page 1046, 2022{\natexlab{a}}.

\bibitem[Cai and Daskalakis(2011)]{cai2011minmax}
Yang Cai and Constantinos Daskalakis.
\newblock On minmax theorems for multiplayer games.
\newblock In \emph{Proceedings of the 20nd Annual ACM-SIAM Symposium on Discrete Algorithms}, pages 217--234. SIAM, 2011.

\bibitem[Bauso and Pesenti(2012)]{bauso2012team}
Dario Bauso and Raffaele Pesenti.
\newblock Team theory and person-by-person optimization with binary decisions.
\newblock \emph{SIAM Journal on Control and Optimization}, 50\penalty0 (5):\penalty0 3011--3028, 2012.

\bibitem[Kok and Vlassis(2006)]{kok2006collaborative}
Jelle~R Kok and Nikos Vlassis.
\newblock Collaborative multiagent reinforcement learning by payoff propagation.
\newblock \emph{Journal of Machine Learning Research}, 7, 2006.

\bibitem[Sutton(2018)]{sutton2018reinforcement}
Richard~S Sutton.
\newblock Reinforcement learning: An introduction.
\newblock \emph{A Bradford Book}, 2018.

\bibitem[Lopez et~al.(2018)Lopez, Behrisch, Bieker-Walz, Erdmann, Fl{\"o}tter{\"o}d, Hilbrich, L{\"u}cken, Rummel, Wagner, and Wie{\ss}ner]{SUMO2018}
Pablo~Alvarez Lopez, Michael Behrisch, Laura Bieker-Walz, Jakob Erdmann, Yun-Pang Fl{\"o}tter{\"o}d, Robert Hilbrich, Leonhard L{\"u}cken, Johannes Rummel, Peter Wagner, and Evamarie Wie{\ss}ner.
\newblock Microscopic traffic simulation using sumo.
\newblock In \emph{The 21st IEEE International Conference on Intelligent Transportation Systems}. IEEE, 2018.

\bibitem[Alegre(2019)]{sumorl}
Lucas~N. Alegre.
\newblock {SUMO-RL}.
\newblock \url{https://github.com/LucasAlegre/sumo-rl}, 2019.

\bibitem[Samvelyan et~al.(2019)Samvelyan, Rashid, de~Witt, Farquhar, Nardelli, Rudner, Hung, Torr, Foerster, and Whiteson]{samvelyan19smac}
Mikayel Samvelyan, Tabish Rashid, Christian~Schroeder de~Witt, Gregory Farquhar, Nantas Nardelli, Tim G.~J. Rudner, Chia-Man Hung, Philiph H.~S. Torr, Jakob Foerster, and Shimon Whiteson.
\newblock {The} {StarCraft} {Multi}-{Agent} {Challenge}.
\newblock \emph{CoRR}, abs/1902.04043, 2019.

\bibitem[Haarnoja et~al.(2018)Haarnoja, Zhou, Abbeel, and Levine]{haarnoja2018soft}
Tuomas Haarnoja, Aurick Zhou, Pieter Abbeel, and Sergey Levine.
\newblock Soft actor-critic: Off-policy maximum entropy deep reinforcement learning with a stochastic actor.
\newblock In \emph{International Conference on Machine Learning}, pages 1861--1870. PMLR, 2018.

\bibitem[Chu et~al.(2020)Chu, Chinchali, and Katti]{chu2020multi}
Tianshu Chu, Sandeep Chinchali, and Sachin Katti.
\newblock Multi-agent reinforcement learning for networked system control.
\newblock \emph{arXiv preprint arXiv:2004.01339}, 2020.

\bibitem[Schulman et~al.(2017)Schulman, Wolski, Dhariwal, Radford, and Klimov]{schulman2017proximal}
John Schulman, Filip Wolski, Prafulla Dhariwal, Alec Radford, and Oleg Klimov.
\newblock Proximal policy optimization algorithms.
\newblock \emph{arXiv preprint arXiv:1707.06347}, 2017.

\bibitem[Bertsekas(2022)]{bertsekas2022abstract}
Dimitri Bertsekas.
\newblock \emph{Abstract dynamic programming}.
\newblock Athena Scientific, 2022.

\bibitem[Bando et~al.(1995)Bando, Hasebe, Nakayama, Shibata, and Sugiyama]{bando1995dynamical}
Masako Bando, Katsuya Hasebe, Akihiro Nakayama, Akihiro Shibata, and Yuki Sugiyama.
\newblock Dynamical model of traffic congestion and numerical simulation.
\newblock \emph{Physical Review E}, 51\penalty0 (2):\penalty0 1035, 1995.

\bibitem[Schulman et~al.(2015)Schulman, Moritz, Levine, Jordan, and Abbeel]{schulman2015high}
John Schulman, Philipp Moritz, Sergey Levine, Michael Jordan, and Pieter Abbeel.
\newblock High-dimensional continuous control using generalized advantage estimation.
\newblock \emph{arXiv preprint arXiv:1506.02438}, 2015.

\bibitem[Kuba et~al.(2022{\natexlab{b}})Kuba, Feng, Ding, Dong, Wang, and Yang]{kuba2022heterogeneous}
Jakub~Grudzien Kuba, Xidong Feng, Shiyao Ding, Hao Dong, Jun Wang, and Yaodong Yang.
\newblock Heterogeneous-agent mirror learning: A continuum of solutions to cooperative marl.
\newblock \emph{arXiv preprint arXiv:2208.01682}, 2022{\natexlab{b}}.

\bibitem[Papoudakis et~al.(2021)Papoudakis, Christianos, Schäfer, and Albrecht]{papoudakis2021benchmarking}
Georgios Papoudakis, Filippos Christianos, Lukas Schäfer, and Stefano~V. Albrecht.
\newblock Benchmarking multi-agent deep reinforcement learning algorithms in cooperative tasks.
\newblock In \emph{Proceedings of the Neural Information Processing Systems Track on Datasets and Benchmarks (NeurIPS)}, 2021.

\end{thebibliography}


\appendix

\section{Notations in Appendix}

For clarity, we define the following notations:
\begin{itemize}
    \item $i^- := \{ 1,2,\ldots,i-1 \}$: the set of indices less than $i$.
    \item $i^+ := \{ i+1, i+2,\ldots,n \}$: the set of indices greater than $i$.
    \item $i^{[+]} := \{ i,i+1,\ldots,n \}$: the set of indices not less than $i$.
    \item $N_{d}[i] := N_{d}(i)\cup \{i\}$: the closed in-degree neighborhood of $i$.
    \item $N_c(S) :=\left(\cup_{i\in S}N_c(i)\right) \setminus S$: the neighbors of all agents in $S$, and $N_{d}(S)$ is defined similarly.
    \item $\cE_c[S_1, S_2]$: the subset of $\cE_c$ containing edges between vertices in the sets $S_1$ and $S_2$.
    \item $\cE_c[S] := \cE_c[S, S]$: the edges within the set $S$.
    \item $\cA_{S}:= \prod_{i\in S}\cA_{i}$: the joint action space of agents in the set $S$.
\end{itemize}

\section{Proof of Optimality}\label{sec:proof-optimality}

To facilitate the proof of \cref{thm:optimal}, we first propose four useful lemmas.

\begin{lemma}\label{lem:graph-condition}
    Let $G_c=(\cN, \cE_c)$ be an undirected graph and $G_d=(\cN, \cE_d)$ be a directed graph. 
    If $N_{d}(i) \supseteq N_c(i^{[+]})$ and $j < i$ hold for any $j \in N_{d}(i)$ and $i\in\cN$, then
    \begin{equation}
        \cE_c[k^{[+]}, k^-] = \cE_c[k^{[+]}, N_{d}(k)], \forall k \in \cN.
    \end{equation}
\end{lemma}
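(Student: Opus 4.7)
The plan is to prove the two set inclusions $\cE_c[k^{[+]}, N_d(k)] \subseteq \cE_c[k^{[+]}, k^-]$ and $\cE_c[k^{[+]}, k^-] \subseteq \cE_c[k^{[+]}, N_d(k)]$ separately, essentially by unpacking the definitions of $N_d(\cdot)$ and $N_c(\cdot)$ and invoking the two hypotheses one at a time.

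First I would use the hypothesis that $j < i$ whenever $j \in N_d(i)$ to conclude that $N_d(k) \subseteq k^-$, which makes the inclusion $\cE_c[k^{[+]}, N_d(k)] \subseteq \cE_c[k^{[+]}, k^-]$ immediate (every edge whose second endpoint lies in $N_d(k)$ has that endpoint automatically lying in $k^-$).

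For the reverse inclusion, I would fix an arbitrary edge $(i,j) \in \cE_c[k^{[+]}, k^-]$, with $i \in k^{[+]}$ and $j \in k^-$, and show $j \in N_d(k)$. Since $j$ is a neighbor of $i$ in $G_c$ and $i \in k^{[+]}$, the point $j$ lies in $\bigcup_{\ell \in k^{[+]}} N_c(\ell)$. Moreover, $j < k$ implies $j \notin k^{[+]}$, so by the definition $N_c(k^{[+]}) = \bigl(\bigcup_{\ell \in k^{[+]}} N_c(\ell)\bigr)\setminus k^{[+]}$, we obtain $j \in N_c(k^{[+]})$. Applying the hypothesis $N_d(k) \supseteq N_c(k^{[+]})$ then yields $j \in N_d(k)$, which places the edge $(i,j)$ in $\cE_c[k^{[+]}, N_d(k)]$ as desired.

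I do not anticipate a real obstacle here; the whole argument is essentially a definition chase, with the only mild subtlety being the need to verify $j \notin k^{[+]}$ before invoking the ``$\setminus S$'' in the definition of $N_c(S)$. Because $\cE_c$ is undirected, I should also remark at the start that $\cE_c[S_1,S_2]$ and $\cE_c[S_2,S_1]$ denote the same set, so the roles of $i$ and $j$ in an edge are interchangeable; this justifies writing endpoints of an edge in $\cE_c[k^{[+]}, k^-]$ as ``one in $k^{[+]}$ and one in $k^-$'' without loss of generality.
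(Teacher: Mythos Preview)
Your proposal is correct and follows essentially the same two-inclusion argument as the paper's proof: first $N_d(k)\subseteq k^-$ from the topological-order hypothesis, then for the reverse inclusion pick an edge with endpoints in $k^{[+]}$ and $k^-$ and use $N_d(k)\supseteq N_c(k^{[+]})$ to place the lower endpoint in $N_d(k)$. If anything, your version is slightly more careful than the paper's, since you explicitly verify $j\notin k^{[+]}$ before invoking the set-difference in the definition of $N_c(S)$, whereas the paper simply asserts ``by definition, $j\in N_c(k^{[+]})$.''
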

\begin{proof}
    We first prove $\cE_c[k^{[+]}, k^-] \supseteq \cE_c[k^{[+]}, N_{d}(k)]$. It suffices to show $N_{d}(k) \subseteq  k^-$, which can be obtained by noting that $j < k$ holds for any $j \in N_{d}(k)$ and $k\in\cN$.

    Next, we prove the reverse inclusion. Assume $i\in k^{[+]}$ and $(i,j) \in \cE_c[k^{[+]}, k^-]$. By definition, $j \in N_c(k^{[+]})$.
    Since $N_{d}(k) \supseteq N_c(k^{[+]})$, it follows that $j \in N_{d}(k)$. Thus, $\cE_c[k^{[+]}, k^-] \subseteq \cE_c[k^{[+]}, N_{d}(k)]$.
\end{proof}

\begin{lemma}\label{lem:nested-neighbor}
    Let $G_c=(\cN, \cE_c)$ be an undirected graph and $G_d=(\cN, \cE_d)$ be a directed graph. 
    If $N_{d}(i) = N_c(i^{[+]})$ and $j < i$ hold for any $j \in N_{d}(i)$ and $i\in\cN$, then
    \begin{equation}
        N_{d}[k] \supseteq N_{d}(k^+), k=1,2,\ldots,n-1.
    \end{equation}
\end{lemma}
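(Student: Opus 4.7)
The plan is to prove $N_d[k] \supseteq N_d(k^+)$ by chasing an arbitrary element $j \in N_d(k^+)$ through the set-theoretic definitions, using both hypotheses: the identity $N_d(i) = N_c(i^{[+]})$ and the topological ordering property $j < i$ for every $j \in N_d(i)$. The overall idea is that the definition of $N_d(k^+)$ explicitly excludes the set $k^+$, so any witness $j$ must lie weakly below $k$, and the case $j < k$ can then be pushed into $N_d(k)$ itself via the identity $N_d(k) = N_c(k^{[+]})$.

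First I would unpack $N_d(k^+) = \left(\bigcup_{i \in k^+} N_d(i)\right) \setminus k^+$ and fix an arbitrary $j$ in this set. This yields some index $i \in k^+ = \{k+1,\ldots,n\}$ with $j \in N_d(i)$, together with $j \notin k^+$. The ordering hypothesis forces $j < i$, and combined with $j \notin k^+$ on a totally ordered index set this gives $j \le k$, splitting the argument into two cases.

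The case $j = k$ is immediate since $\{k\} \subseteq N_d[k]$. For $j < k$, I would apply the hypothesis $N_d(i) = N_c(i^{[+]})$ to rewrite the membership as $j \in N_c(i^{[+]}) = \left(\bigcup_{m \in i^{[+]}} N_c(m)\right) \setminus i^{[+]}$, producing an $m \in i^{[+]}$ with $j \in N_c(m)$ and $j \notin i^{[+]}$. Since $i \ge k+1$, the inclusion $i^{[+]} \subseteq k^+ \subseteq k^{[+]}$ holds, so $m \in k^{[+]}$. The strict inequality $j < k$ also ensures $j \notin k^{[+]}$, and therefore $j \in N_c(k^{[+]}) = N_d(k) \subseteq N_d[k]$, closing the argument.

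I expect the main obstacle to be purely bookkeeping: one must be careful that when transferring $j$ from the exclusion in $N_d(k^+)$ (which only excludes $k^+$, not $k$) to the exclusion in $N_c(k^{[+]})$ (which excludes $k$ as well), the split into $j = k$ and $j < k$ is handled cleanly so that the exclusion sets $i^{[+]}$ and $k^{[+]}$ do not accidentally rule out the witness. Beyond this, the lemma is essentially a direct unfolding of definitions combined with monotonicity of the sets $i^{[+]}$ in $i$.
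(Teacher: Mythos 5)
Your proof is correct, but it takes a genuinely different route from the paper. The paper proves the inclusion by downward induction on $k$: the base case is $k=n-1$, and the induction step chains the inclusions $N_d[k] \supseteq N_c(k^+) = N_d(k+1)$ and $N_d(k+1) = N_d[k+1]\setminus\{k+1\} \supseteq N_d((k+1)^+)\setminus\{k+1\}$ to conclude $N_d[k]\supseteq N_d(k^+)$. You instead give a direct element-chasing argument: take $j \in N_d(k^+)$, so $j \in N_d(i)$ for some $i \in k^+$ with $j \notin k^+$, hence $j \le k$; the case $j=k$ is absorbed by the closure $N_d[k]$, and for $j<k$ you translate $j \in N_d(i) = N_c(i^{[+]})$ into a coordination-graph neighbor relation $j \in N_c(m)$ with $m \in i^{[+]} \subseteq k^{[+]}$ and $j \notin k^{[+]}$, giving $j \in N_c(k^{[+]}) = N_d(k)$. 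Your version is more elementary — it dispenses with the induction entirely and makes explicit exactly where each of the two hypotheses (the identity $N_d(i)=N_c(i^{[+]})$ and the topological ordering) is invoked, at the cost of slightly more case bookkeeping around the excluded sets $k^+$ versus $k^{[+]}$. Both arguments are valid; the paper's induction is perhaps more in keeping with the inductive structure of the surrounding proofs, while yours is self-contained and arguably easier to verify line by line.
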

\begin{proof}
    We prove the statement by induction.
    
    {\bfseries Base case:} Show that the statement holds for $k=n-1$.
    \begin{equation}
        N_{d}[n-1] = (N_c((n-1)^{[+]})\cup \{n-1\}) \supseteq N_c(n) = N_{d}(n) = N_{d}((n-1)^+).
    \end{equation}

    {\bfseries Induction step:} Assume the statement holds for $k+1$. We prove it for $k$.
    \begin{equation}
        N_{d}[k] = (N_c(k^{[+]})\cup \{k\}) \supseteq N_c(k^+) = N_{d}(k+1).
    \end{equation}
    By the induction hypothesis, we have:
    \begin{equation}
        N_{d}(k+1) = N_{d}[k+1] \setminus \{k+1\} \supseteq N_{d}((k+1)^+) \setminus \{k+1\} .
    \end{equation}
    Therefore,
    \begin{equation}
        N_{d}[k] \supseteq N_{d}(k+1) \cup (N_{d}((k+1)^+) \setminus \{k+1\}) \supseteq N_{d}(k^+).
    \end{equation}

    This completes the proof.
\end{proof}

\begin{lemma}\label{lem:decompose-Q}
    Let $G_c=(\cN, \cE_c)$ be a CG of $Q$, and $G_d=(\cN, \cE_d)$ be a directed graph.
    If $N_{d}(k) \supseteq N_c(k^{[+]})$ and $j < k$ hold for any $j \in N_{d}(k)$ and $k\in\cN$, 
    then given any $i$, there exist functions $Q_1: \cS\times\cA_{N_{d}(i) \cup i^{[+]}}\to\mR$ and $Q_2: \cS\times\cA_{i^-}\to\mR$ such that
    \begin{equation}\label{Q=Q1+Q2}
        Q(s, a) = Q_1(s, a_{i^{[+]}}, a_{N_{d}(i)}) + Q_2(s, a_{i^-}),~~\forall s\in\cS, a\in\cA.
    \end{equation}
\end{lemma}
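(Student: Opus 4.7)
The plan is to unpack the CG decomposition of $Q$ and sort its edges according to whether their endpoints lie in $i^-$ or $i^{[+]}$, then apply \cref{lem:graph-condition} to show that the ``crossing'' edges only touch vertices in $N_d(i)$. Because $G_c$ is assumed connected in the preliminaries, the vertex terms have already been absorbed into edge terms, so I may work with the pure-edge decomposition
\begin{equation}
    Q(s,a) = \sum_{(j,k)\in\cE_c} Q_{jk}(s, a_j, a_k).\nonumber
\end{equation}

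Next I would partition $\cE_c$ into three disjoint pieces: $\cE_c[i^-]$ (both endpoints in $i^-$), $\cE_c[i^{[+]}]$ (both endpoints in $i^{[+]}$), and $\cE_c[i^{[+]}, i^-]$ (one endpoint in each). Edges in $\cE_c[i^-]$ contribute terms depending only on $(s, a_{i^-})$, which I will collect into $Q_2$. Edges in $\cE_c[i^{[+]}]$ contribute terms depending only on $(s, a_{i^{[+]}})$, which go into $Q_1$. The delicate piece is the crossing set $\cE_c[i^{[+]}, i^-]$: each such edge has its $i^-$-endpoint potentially ranging over any index less than $i$, which would introduce unwanted dependence on components of $a_{i^-}$ outside $N_d(i)$.

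Here \cref{lem:graph-condition}, instantiated at $k=i$, is exactly what I need: the hypotheses of the present lemma match its hypotheses, so $\cE_c[i^{[+]}, i^-] = \cE_c[i^{[+]}, N_d(i)]$. Consequently, the $i^-$-endpoint of every crossing edge lies in $N_d(i)$, and its contribution is a function of $(s, a_{i^{[+]}}, a_{N_d(i)})$, which I also place in $Q_1$. Concretely I will set
\begin{equation}
    Q_2(s, a_{i^-}) := \sum_{(j,k)\in\cE_c[i^-]} Q_{jk}(s, a_j, a_k),\nonumber
\end{equation}
\begin{equation}
    Q_1(s, a_{i^{[+]}}, a_{N_d(i)}) := \sum_{(j,k)\in\cE_c[i^{[+]}]} Q_{jk}(s, a_j, a_k) + \sum_{(j,k)\in\cE_c[i^{[+]}, N_d(i)]} Q_{jk}(s, a_j, a_k),\nonumber
\end{equation}
and conclude \eqref{Q=Q1+Q2} since the three edge subsets partition $\cE_c$.

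I do not anticipate a serious obstacle: the only non-trivial step is the invocation of \cref{lem:graph-condition} to reroute the crossing edges through $N_d(i)$, and the acyclicity/topological-sort hypothesis $j<k$ for all $j \in N_d(k)$ is what allows that lemma to apply. The remaining work is purely bookkeeping about partitioning the edge set and tracking the variables on which each edge term depends.
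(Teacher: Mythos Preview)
Your proposal is correct and follows essentially the same route as the paper: partition the CG edges into within-$i^{[+]}$, within-$i^-$, and crossing pieces, then invoke \cref{lem:graph-condition} to force the crossing edges into $\cE_c[i^{[+]}, N_d(i)]$. The only cosmetic difference is that the paper further splits $\cE_c[i^-]$ into $\cE_c[N_d(i)]$ and $\cE_c[i^-]\setminus\cE_c[N_d(i)]$, moving the former into $Q_1$; your simpler choice of putting all of $\cE_c[i^-]$ into $Q_2$ is equally valid for the stated lemma.
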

\begin{proof}
    Decomposing $Q$ according to the structure of the CG $G_c$ yields \begin{equation}
        Q(s, a) = \left( \sum_{(j,k)\in \cE_c[i^{[+]},i^-]}+ \sum_{(j,k)\in \cE_c[i^{[+]}]} + \sum_{(j,k)\in \cE_c[i^-]} \right) Q_{jk}(s,a_{j}, a_{k}).
    \end{equation}
    By \cref{lem:graph-condition}, $\cE_c[i^{[+]}, i^-] = \cE_c[i^{[+]}, N_{d}(i)]$. We rewrite the equation as:
    \begin{equation}
        Q(s, a) = \left( \sum_{(j,k)\in \cE_c[i^{[+]},N_{d}(i)]}+ \sum_{(j,k)\in \cE_c[i^{[+]}]} + \sum_{(j,k)\in \cE_c[N_{d}(i)]} + \sum_{(j,k)\in \cE_c[i^-]\setminus \cE_c[N_{d}(i)]} \right) Q_{jk}(s,a_{j}, a_{k}).
    \end{equation}
    Define:
    \begin{equation}
        Q_1(s, a_{N_{d}(i)}, a_{i^{[+]}}) \triangleq \left( \sum_{(j,k)\in \cE_c[i^{[+]},N_{d}(i)]}+ \sum_{(j,k)\in \cE_c[i^{[+]}]} + \sum_{(j,k)\in \cE_c[N_{d}(i)]} \right)Q_{jk}(s, a_{j}, a_{k}),
    \end{equation}
    which is independent of $a_{i^-\setminus N_{d}(i)}$, and define:
    \begin{equation}
        Q_2(s, a_{i^-}) \triangleq \sum_{(j,k)\in \cE_c[i^-]\setminus \cE_c[N_{d}(i)]}Q_{jk}(s, a_{j}, a_{k}),
    \end{equation}
    which is independent of $a_{i^{[+]}}$.
    Then we have \eqref{Q=Q1+Q2}.
\end{proof}

\begin{lemma}\label{lem:independent-action}
    Let $G_c=(\cN, \cE_c)$ be a CG of $Q$.
    Let $\pi_{-N_{d}[i]}$ be any subpolicy of a joint policy associated with an ADG $G_{d} = (\cN, \cE_{d})$.
    If $N_{d}(k) = N_c(k^{[+]})$ holds for any $k\in\cN$, then we have:
    \begin{equation}
        \underset{\pi_i(\cdot|s,a'_{N_{d}(i)})}{\argmax} \mE_{\pi_{i}, \pi_{-N_{d}[i]}} \left[ Q(s, a'_{N_{d}(i)}, a_{-N_{d}(i)}) \right]
        = \underset{\pi_i(\cdot|s,a'_{N_{d}(i)})}{\argmax} \mE_{\pi_i, \pi_{i^+}}\left[ Q(s, a'_{i^-}, a_{i^{[+]}}) \right], ~~\forall s\in\cS, a'_{i^-}\in\cA_{i^-}.
    \end{equation}
\end{lemma}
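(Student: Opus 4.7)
My plan is to combine \cref{lem:decompose-Q} with \cref{lem:nested-neighbor}: the former splits $Q$ into a component that reads only $(a_{i^{[+]}}, a_{N_{d}(i)})$ and a residual component that reads only $a_{i^-}$, while the latter guarantees that the policies indexed by $i^+$ never query the actions in $i^- \setminus N_{d}(i)$. Together these will let me rewrite each expectation, viewed as a function of $\pi_i$, as the same quantity modulo an additive $\pi_i$-independent term, so the two $\argmax$ sets coincide.

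Concretely, the first step will be to invoke \cref{lem:decompose-Q} to write $Q(s,a) = Q_1(s, a_{i^{[+]}}, a_{N_{d}(i)}) + Q_2(s, a_{i^-})$. After substituting $a_{N_{d}(i)} = a'_{N_{d}(i)}$ on the left-hand side and $a_{i^-} = a'_{i^-}$ on the right-hand side, the $Q_2$ contribution on the left becomes $Q_2(s, a'_{N_{d}(i)}, a_{i^- \setminus N_{d}(i)})$, whose expectation under $\pi_{-N_{d}[i]}$ depends neither on $a_i$ nor on $\pi_i$, while the $Q_2$ contribution on the right is the genuine constant $Q_2(s, a'_{i^-})$. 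Both therefore drop out of the $\argmax$, so the problem reduces to comparing the two $Q_1$ expectations.

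Next, I will verify that $\mE_{\pi_i,\pi_{-N_{d}[i]}}[Q_1(s, a_{i^{[+]}}, a'_{N_{d}(i)})] = \mE_{\pi_i,\pi_{i^+}}[Q_1(s, a_{i^{[+]}}, a'_{N_{d}(i)})]$. Because $Q_1$ involves only $a_i$, $a_{i^+}$, and the fixed $a'_{N_{d}(i)}$, it suffices to show that the conditional distribution of $a_{i^+}$ under the full $\pi_{-N_{d}[i]}$ (given $s$, $a'_{N_{d}(i)}$, and $a_i$) coincides with the distribution produced by $\pi_{i^+}$ alone. This is where \cref{lem:nested-neighbor} enters: the inclusion $N_{d}(i^+) \subseteq N_{d}[i]$ implies that every $\pi_j$ with $j > i$ depends only on coordinates in $N_{d}(i) \cup i^{[+]}$, so the actions in $i^- \setminus N_{d}(i)$ sampled from $\pi_{-N_{d}[i]}$ never feed into $\pi_{i^+}$. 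Marginalizing them out via the tower property then leaves the distribution of $a_{i^+}$ unchanged.

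The main obstacle I anticipate is the careful bookkeeping in this second step: one must unfold $\mE_{\pi_{-N_{d}[i]}}$ in the topological order of the ADG and confirm, using the inclusion from \cref{lem:nested-neighbor}, that no policy in $\pi_{i^+}$ conditions on an action outside $N_{d}(i) \cup i^{[+]}$. Once this dependency audit is in hand, the remaining assembly is immediate: both sides reduce to $\mE_{\pi_i,\pi_{i^+}}[Q_1(s, a_{i^{[+]}}, a'_{N_{d}(i)})]$ plus a term that is constant in $\pi_i(\cdot|s, a'_{N_{d}(i)})$, so the $\argmax$ sets over $\pi_i(\cdot|s, a'_{N_{d}(i)})$ coincide, as required.
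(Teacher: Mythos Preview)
Your proposal is correct and follows essentially the same route as the paper: invoke \cref{lem:decompose-Q} to split $Q=Q_1+Q_2$, discard the $Q_2$ contribution as $\pi_i$-independent, and use \cref{lem:nested-neighbor} to argue that $\pi_{i^+}$ never reads $a_{i^-\setminus N_d(i)}$ so that the remaining $Q_1$ expectation under $\pi_{-N_d[i]}$ equals that under $\pi_{i^+}$. The only cosmetic difference is that the paper, after dropping $Q_2$, adds back the constant $Q_2(s,a'_{i^-})$ to recombine into $Q(s,a'_{i^-},a_{i^{[+]}})$ before applying the policy-independence step, whereas you work directly with $Q_1$; both arguments are equivalent.
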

\begin{proof}
    By \cref{lem:nested-neighbor}, the actions of the agents in $N_{d}(i^+)$ are a subset of the actions of agents in $N_{d}[i]$. That is, $\pi_{i^+}$ is independent of $a_{i^-\setminus N_{d}(i)}$.
    Therefore, for any $s\in\cS, a'_{i^-}\in\cA_{i^-}$, it holds that
    \begin{equation}\label{eq:independent-action-c}
        \mE_{\pi_{i}, \pi_{-N_{d}[i]}}\left[ Q(s, a'_{i^-}, a_{i^{[+]}}) \right] = \mE_{\pi_i, \pi_{i^+}}\left[ Q(s, a'_{i^-}, a_{i^{[+]}}) \right].
    \end{equation}
    Now, consider:
    \begin{equation}
        \begin{aligned}
            &\underset{\pi_i(\cdot|s,a'_{N_{d}(i)})}{\argmax} \mE_{\pi_{i}, \pi_{-N_{d}[i]}} \left[ Q(s, a'_{N_{d}(i)}, a_{-N_{d}(i)}) \right] \\
            =& \underset{\pi_i(\cdot|s,a'_{N_{d}(i)})}{\argmax} \mE_{\pi_{i}, \pi_{-N_{d}[i]}} \left[ Q_1(s, a'_{N_{d}(i)}, a_{i^{[+]}}) + Q_2(s, a'_{N_{d}(i)}, a_{i^-\setminus N_{d}(i)})\right] & (\text{by \cref{lem:decompose-Q}})\\
            =& \underset{\pi_i(\cdot|s,a'_{N_{d}(i)})}{\argmax} \mE_{\pi_{i}, \pi_{-N_{d}[i]}} \left[ Q_1(s, a'_{N_{d}(i)}, a_{i^{[+]}}) + Q_2(s, a'_{i^-}) \right] & (\text{since } Q_2 \text{ is independent of } a_{i}) \\
            =& \underset{\pi_i(\cdot|s,a'_{N_{d}(i)})}{\argmax} \mE_{\pi_{i}, \pi_{-N_{d}[i]}}\left[ Q(s, a'_{i^-}, a_{i^{[+]}}) \right] \\
            =& \underset{\pi_i(\cdot|s,a'_{N_{d}(i)})}{\argmax} \mE_{\pi_i, \pi_{i^+}}\left[ Q(s, a'_{i^-}, a_{i^{[+]}}) \right]. & (\text{by Equation \eqref{eq:independent-action-c}})
        \end{aligned}
    \end{equation}
    Thus, the proof is complete.
\end{proof}

{\bfseries Proof of \cref{thm:optimal}}

We prove by induction that for every $i\in\cN, a'_{i^-}\in\cA_{i^-}$ and $s\in\cS$:
\begin{equation}
    \mE_{\pi_{i^{[+]}}}\left[ Q^{\pi}(s, a'_{i^-}, a_{i^{[+]}}) \right] = \max_{a_{i^{[+]}}} Q^{\pi}(s, a'_{i^-}, a_{i^{[+]}}).
\end{equation}

{\bfseries Base case:} Show that the statement holds for index $n$.

By the definition of $G_d$-optimality and \cref{lem:independent-action}, for any $s\in\cS, a'_{n^-}\in\cA_{n^-}$:
\begin{equation}
    \begin{aligned}
        \pi_n(\cdot | s, a'_{N_{d}(n)})
        &\in \underset{\pi'_n(\cdot | s, a'_{N_{d}(n)})}{\argmax} \mE_{\pi'_n, \pi_{-N_{d}[n]}}\left[ Q^{\pi}(s, a_{-N_{d}[n]}, a'_{N_{d}(n)}, a_{n}) \right] \\
        &= \underset{\pi'_n(\cdot | s, a'_{N_{d}(n)})}{\argmax} \mE_{\pi'_n}\left[ Q^{\pi}(s, a'_{n^-}, a_{n}) \right].
    \end{aligned}
\end{equation}
This implies:
\begin{equation}
    \mE_{\pi_n}\left[ Q^{\pi}(s, a'_{n^-}, a_{n}) \right] = \max_{\pi'_n(\cdot | s, a'_{N_{d}(n)})} \mE_{\pi'_n}\left[ Q^{\pi}(s, a'_{n^-}, a_{n}) \right] = \max_{a_{n}} Q^{\pi}(s, a'_{n^-}, a_{n}).
\end{equation}

{\bfseries Induction step:} Assume the induction hypothesis holds for $k+1$, we will show that it holds for $k$.

By the definition of $G_d$-optimality and \cref{lem:independent-action}, for any $s\in\cS, a'_{k^-}\in\cA_{k^-}$.
\begin{equation}
    \begin{aligned}
        \pi_k(\cdot | s, a'_{N_{d}(k)}) 
        &\in \underset{\pi'_k(\cdot | s, a'_{N_{d}(k)})}{\argmax} \mE_{\pi'_k, \pi_{-N_{d}[k]}}\left[ Q^{\pi}(s, a'_{N_{d}(k)}, a_{-N_{d}(k)}) \right] \\
        &= \underset{\pi'_k(\cdot | s, a'_{N_{d}(k)})}{\argmax} \mE_{\pi'_k, \pi_{k^+}}\left[ Q^{\pi}(s, a'_{k^-}, a_{k^{[+]}}) \right].
    \end{aligned}
\end{equation}
Thus, we have:
\begin{equation}
    \begin{aligned}
        \mE_{\pi_k, \pi_{k^+}}\left[ Q^{\pi}(s, a'_{k^-}, a_{k^{[+]}}) \right]
        &= \max_{\pi'_k(\cdot | s, a'_{N_{d}(k)})} \mE_{\pi'_k, \pi_{k^+}}\left[ Q^{\pi}(s, a'_{k^-}, a_{k^{[+]}}) \right] \\
        &= \max_{\pi'_k(\cdot | s, a'_{N_{d}(k)})} \mE_{\pi'_k, \pi_{(k+1)^{[+]}}}\left[ Q^{\pi}(s, a'_{k^-}, a_{k^{[+]}}) \right]\\
        &= \max_{\pi'_k(\cdot | s, a'_{N_{d}(k)})} \mE_{\pi'_k}\left[ \max_{a_{(k+1)^{[+]}}} Q^{\pi}(s, a'_{k^-}, a_{k^{[+]}}) \right] & (\text{by induction hypothesis})  \\
        &= \max_{a_{k}}\max_{a_{(k+1)^{[+]}}} Q^{\pi}(s, a'_{k^-}, a_{k^{[+]}}) \\
        &= \max_{a_{k^{[+]}}} Q^{\pi}(s, a'_{k^-}, a_{k}).
    \end{aligned}
\end{equation}

For $k=1$, we have:
\begin{equation}
    V^{\pi}(s) = \mE_{\pi}\left[ Q^{\pi}(s, a) \right] = \max_{a}Q^{\pi}(s, a) = TV^{\pi}(s).
\end{equation}
which shows that $V^{\pi}$ is a fixed point of the Bellman optimal operator $T$. 
Therefore, $V^{\pi}=V^{*}$, implying that $\pi$ is globally optimal. \qed

\section{Proof of Convergence}\label{sec:proof-convergence}

Before proving \cref{thm:convergence}, we first propose a theorem showing the convergence relying on a more relaxed condition. Specifically, \cref{thm:convergence-original} only requires that the state-action value functions of the converged policy satisfy the decomposed form of the CG, 
rather than requiring this property for all policies.

\begin{theorem}\label{thm:convergence-original}
    Assume \cref{asp:stopping-criterion} holds. 
    Let $\{\pi^k\}$ be a sequence of joint policies generated by \cref{alg:ad-mpi}, associated with an ADG $G_{d} = (\cN, \cE_{d})$.
    Let $V^{\circ} = \lim_{k\to\infty}V^{\pi^k}$, and let $G_{c} = (\cN, \cE_{c})$ be a CG of state-action value $Q^{V^{\circ}}$.
    If:
    \begin{equation}
        N_{d}(i) = N_c(i^{[+]}), \forall i\in \mathcal{N},
    \end{equation}
    then $\{\pi^k\}$ converges to a globally optimal policy in finite terms.
\end{theorem}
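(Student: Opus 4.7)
The plan is to follow the classical policy iteration template, adapted to the action-dependent multi-agent setting, and then invoke \cref{thm:optimal} at the terminal step. The argument decomposes into four stages: (i) per-agent monotonic policy improvement, (ii) finite termination via pigeonhole, (iii) verification that the converged policy is $G_d$-locally optimal, and (iv) application of \cref{thm:optimal}.

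First I would show $V^{\pi^{k+1}} \ge V^{\pi^k}$ componentwise. Writing $\pi^{k,i}$ for the intermediate policy defined in \cref{alg:ad-mpi}, so that $\pi^{k,1} = \pi^k$ and $\pi^{k,n+1} = \pi^{k+1}$, a telescoping identity gives
\begin{equation*}
    T_{\pi^{k+1}} V^{\pi^k}(s) - V^{\pi^k}(s) = \sum_{i=1}^{n} \bigl( T_{\pi^{k,i+1}} V^{\pi^k}(s) - T_{\pi^{k,i}} V^{\pi^k}(s) \bigr).
\end{equation*}
Since all policies are deterministic, the $i$-th summand at state $s$ reduces to a difference between $Q^{\pi^k}$ evaluated on the rollouts of $\pi^{k,i+1}$ and $\pi^{k,i}$; these rollouts differ only through agent $i$'s choice and the DAG-induced downstream updates captured by $\pi^{k,i}_{-N_d[i]}(s, a_{N_d[i]})$. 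The update rule defines $\pi^{k+1}_i(s, a_{N_d(i)})$ as the (unique, by \cref{asp:stopping-criterion}) maximizer of exactly this expression, so each summand is nonnegative, and strictly positive at any $(s, a_{N_d(i)})$ at which $\pi^{k+1}_i \ne \pi^k_i$. Monotonicity of $T_{\pi^{k+1}}$ then propagates this into $V^{\pi^{k+1}} \ge V^{\pi^k}$, with strict inequality at some state whenever the policy changes.

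With finite $\cS$, finite $\cA_i$, and finite conditioning sets $N_d(i)$, there are only finitely many deterministic $G_d$-structured policies and hence only finitely many attainable value functions $V^{\pi^k}$. Monotonic improvement plus strict increase on each policy change forces some finite $K$ with $\pi^{K+1} = \pi^K =: \pi^\circ$ and $V^\circ = V^{\pi^\circ}$. Substituting $\pi^{k,i} = \pi^\circ$ into the fixed-point version of the update gives, for every $i$ and $(s, a_{N_d(i)})$,
\begin{equation*}
    \pi^\circ_i(s, a_{N_d(i)}) = \argmax_{a_i\in\cA_i} Q^{\pi^\circ}\bigl(s, a_{N_d[i]}, \pi^\circ_{-N_d[i]}(s, a_{N_d[i]})\bigr),
\end{equation*}
which, after expanding the deterministic expectations in~\eqref{eq:gd-locally}, is exactly $G_d$-local optimality of $\pi^\circ$. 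To finish, $V^\circ = V^{\pi^\circ}$ implies $Q^{V^\circ} = Q^{\pi^\circ}$, so the hypothesis that $G_c$ is a CG of $Q^{V^\circ}$ makes it a CG of $Q^{\pi^\circ}$; combined with $N_d(i) = N_c(i^{[+]})$, \cref{thm:optimal} yields $V^{\pi^\circ} = V^*$.

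The main obstacle is the per-agent improvement in stage~(i). Switching $\pi^k_i$ to $\pi^{k+1}_i$ does not merely replace a single coordinate of the joint action but simultaneously cascades through the DAG to every downstream agent in $-N_d[i] \cap i^+$ whose policy depends, possibly indirectly, on $a_i$. The algorithm is crafted so that these cascaded actions are precisely $\pi^{k,i}_{-N_d[i]}(s, a_{N_d[i]})$, which is the same function appearing inside the $\argmax$ used to define $\pi^{k+1}_i$; carefully verifying that the rollout of $\pi^{k,i+1}$ coincides with this precomposition, using the topological order $j < i$ for all $j \in N_d(i)$, is the one bookkeeping step that needs attention.
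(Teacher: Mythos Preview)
Your stages (i), (iii), and (iv) are sound, and stage (i) is essentially the paper's \cref{lem:convergence-V}. The gap is in stage (ii). The telescoping identity in stage (i) evaluates each summand at the single \emph{on-path} conditioning $\bar a_{N_d(i)}$ produced by the already-updated agents $\pi^{k+1}_1,\ldots,\pi^{k+1}_{i-1}$ at state $s$; it says nothing about off-path inputs. Hence ``strictly positive at any $(s,a_{N_d(i)})$ at which $\pi^{k+1}_i\ne\pi^k_i$'' is not what you proved: under \cref{asp:stopping-criterion} you only get strict positivity at $s$ when $\pi^{k+1}_i(s,\bar a_{N_d(i)})\ne\pi^k_i(s,\bar a_{N_d(i)})$. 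Once $V^{\pi^k}$ stabilizes at $V^\circ$, this forces the induced joint action map $s\mapsto\pi^k(s)$ to stabilize, but the individual policies $\pi^k_i$ can continue to change at off-path $(s,a_{N_d(i)})$, because the objective $Q^{V^\circ}\bigl(s,a_{N_d[i]},\pi^{k,i}_{-N_d[i]}(s,a_{N_d[i]})\bigr)$ still depends on the not-yet-stabilized policies $\pi^{k+1}_{i^-\setminus N_d(i)}$ and $\pi^{k}_{i^+}$. Pigeonhole therefore gives only eventual periodicity of the tuple $(\pi^k_1,\ldots,\pi^k_n)$, not a fixed point, and your stage (iii)---which needs the argmax condition at \emph{every} $(s,a_{N_d(i)})$ to obtain $G_d$-local optimality---does not go through.

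The paper closes exactly this hole by invoking the graph hypothesis $N_d(i)=N_c(i^{[+]})$ \emph{before} \cref{thm:optimal}, not only after. Once $V^{\pi^m}=V^\circ$, it applies \cref{lem:decompose-Q} to split $Q^{V^\circ}=Q_1+Q_2$ with $Q_2$ independent of $a_i$, so that the argmax over $a_i$ no longer depends on $\pi^m_{i^-\setminus N_d(i)}$; combined with \cref{lem:nested-neighbor}, the remaining dependence on later agents is through $a_{N_d[i]}$ alone. A backward induction on $i$ (from $n$ down to $1$) then shows each $\pi^m_i$ becomes stationary at every input $(s,a_{N_d(i)})$, after which your stages (iii) and (iv) apply verbatim. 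Your plan can be repaired by inserting this argument between stages (ii) and (iii).
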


%

The proof of \cref{thm:convergence-original} proceeds in three steps. 
First, we establish the convergence of the joint policy, as shown in \cref{lem:convergence-V}. Next, we prove the convergence of individual agent policies. 
Finally, using \cref{thm:optimal}, we guarantee the global optimality of the converged policy.

\begin{lemma}\label{lem:convergence-V}
    If $\{\pi^k\}$ is a sequence of joint policies generated by \cref{alg:ad-mpi}, associated with an ADG $G_{d} = (\cN, \cE_{d})$,
    then $\{ V^{\pi^k} \}$ will converge to some $V^{\circ}$ in finite terms.
\end{lemma}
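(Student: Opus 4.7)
The plan is to mirror the classical policy iteration convergence argument, adapted to the sequential per-agent updates of \cref{alg:ad-mpi}. Two ingredients suffice: (i) pointwise monotone improvement $V^{\pi^{k+1}} \ge V^{\pi^k}$ of the outer iterates, and (ii) finiteness of the set of attainable deterministic joint policies, and therefore of attainable value functions. Combining the two immediately yields termination in finite terms, since a monotone non-decreasing real sequence taking values in a finite set can only strictly increase finitely many times and must then become constant, giving the desired limit $V^{\circ}$.

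\textbf{Key step: monotone improvement.} I would establish (i) by an inductive coordinate-wise argument within each outer iteration $k$. Introduce the intermediate joint policies $\pi^{k,1} := \pi^k, \pi^{k,2}, \ldots, \pi^{k,n+1} := \pi^{k+1}$. Because \cref{asp:stopping-criterion} makes every argmax a singleton, each $\pi^{k,i}_j$ is a deterministic map, and by unrolling the ADG in topological order $\pi^{k,i}$ induces a well-defined joint action $\pi^{k,i}(s) \in \cA$ for every $s$. At the $i$-th inner update, $\pi^{k+1}_i(s, a_{N_d(i)})$ is chosen to maximize $Q^{\pi^k}(s, a_{N_d[i]}, \pi^{k,i}_{-N_d[i]}(s, a_{N_d[i]}))$ over $a_i$. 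Evaluating the resulting inequality at the specific value $a_{N_d(i)} = \pi^{k,i}(s)_{N_d(i)}$ and using that $\pi^{k,i}$ and $\pi^{k,i+1}$ agree on every agent other than $i$, the substitution reproduces exactly the deterministic joint actions $\pi^{k,i}(s)$ and $\pi^{k,i+1}(s)$ on the two sides. This yields $Q^{\pi^k}(s, \pi^{k,i+1}(s)) \ge Q^{\pi^k}(s, \pi^{k,i}(s))$ for every $s$, and telescoping over $i = 1, \ldots, n$ gives
\begin{equation*}
Q^{\pi^k}(s, \pi^{k+1}(s)) \;\ge\; Q^{\pi^k}(s, \pi^k(s)) \;=\; V^{\pi^k}(s), \quad \forall\, s \in \cS.
\end{equation*}
The classical deterministic policy improvement theorem (proved by iterating $T_{\pi^{k+1}}$ starting from $V^{\pi^k}$ and passing to the fixed point) then lifts this pointwise $Q$-comparison to $V^{\pi^{k+1}} \ge V^{\pi^k}$.

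\textbf{Finite termination.} With $\cS$ and $\cA$ finite and every $\pi^{k,i}$ deterministic, the set of attainable joint policies is finite, hence so is the range $\{V^{\pi} : \pi \text{ deterministic}\} \subset \cR(\cS)$. The sequence $\{V^{\pi^k}\}$ lives in this finite set and is monotone non-decreasing in the pointwise order, so it can change only finitely many times before stabilizing at some $V^{\circ}$, which is exactly the finite-term convergence claimed.

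\textbf{Main obstacle.} The delicate point is the consistency check in the coordinate-wise step: one must verify that, when $a_{N_d[i]}$ is taken equal to $\pi^{k,i}(s)_{N_d[i]}$, the substitution $\pi^{k,i}_{-N_d[i]}(s, a_{N_d[i]})$ truly reconstructs $\pi^{k,i}(s)_{-N_d[i]}$, and analogously for $\pi^{k,i+1}$. This requires an induction on the topological index over agents outside $N_d[i]$, using the DAG structure of $G_d$ and the fact that the per-agent policies of $\pi^{k,i}$ and $\pi^{k,i+1}$ coincide outside index $i$. Once this substitution consistency is established, the rest of the argument is an instance of standard policy-iteration reasoning.
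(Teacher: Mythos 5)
Your proposal is correct and follows essentially the same route as the paper's proof: the coordinate-wise telescoping of $Q^{\pi^k}(s,\pi^{k,i}(s))$ across the inner updates to get $T_{\pi^{k+1}}V^{\pi^k}\ge V^{\pi^k}$, the lift to $V^{\pi^{k+1}}\ge V^{\pi^k}$ via monotonicity of the Bellman operator, and finite termination from the finiteness of deterministic policies. The substitution-consistency point you flag is indeed the only place requiring care, and it holds exactly as you describe because $\pi^{k,i}$ and $\pi^{k,i+1}$ differ only at agent $i$ and the topological order keeps $a_{N_d(i)}$ unaffected by $a_i$.
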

\begin{proof}
    According to \cref{alg:ad-mpi}, for any $s\in\cS$ and $k\geq 0$:
    \begin{equation}
        \begin{aligned}
            T_{\pi^{k+1}}V^{\pi^{k}}(s) 
            & = Q^{\pi^{k}}(s, \pi^{k+1}(s)) \\
            & = \max_{a_n}Q^{\pi^{k}}(s, a_{n}, \pi^{k,n}_{-n}(s)) \\
            & \geq Q^{\pi^{k}}(s, \pi^{k,n}(s)) \\
            & = \max_{a_{n-1}}Q^{\pi^{k}}(s, a_{n-1}, \pi^{k,n-1}_{-(n-1)}(s, a_{n-1})) \\
            & \geq Q^{\pi^{k}}(s, \pi^{k,n-1}(s)) \\
            & \cdots \\
            & \geq Q^{\pi^{k}}(s, \pi^{k}(s)) \\
            & = T_{\pi^{k}}V^{\pi^{k}}(s) = V^{\pi^{k}}(s).
        \end{aligned}
    \end{equation}
    Since $T_{\pi^{k+1}}V^{\pi^{k}}(s) \geq V^{\pi^{k}}(s)$, by the monotonicity of the Bellman operator \cite{bertsekas2022abstract}, we have
    \begin{equation}
        V^{\pi^{k+1}}(s) = \lim_{m\rightarrow\infty} T_{\pi^{k+1}}^{m}V^{\pi^{k+1}}(s) \geq T_{\pi^{k+1}}V^{\pi^{k}}(s) \geq V^{\pi^{k}}(s).
    \end{equation}
    Thus, $\{V^{\pi^k}\}$ is non-decreasing and bounded above by the optimal value function $V^{*}$.
    Therefore, $\{V^{\pi^k}\}$ converges to some $V^{\circ}$.
    Further, since the number of deterministic policies is finite, $\{ V^{\pi^k} \}$ will converge to $V^{\circ}$ in finite terms.
\end{proof}

{\bfseries Proof of \cref{thm:convergence-original}}

By \cref{lem:convergence-V}, $\{ V^{\pi^m} \}$ converges to $V^{\circ}$ in finite terms. 
Assume $V^{\pi^m}=V^{\circ}$ for all $m\geq M$.
We prove by induction that for any agent $i\in\cN$ there exist a policy $\pi^{\circ}_i$ and $M_i > M$ such that $\pi^{\circ}_i=\pi^m_i, \forall m \geq M_i$, 
making $\pi^{\circ}_i$ the convergence point. 

{\bfseries Base case:} Show that the statement holds for agent $n$.

For any $s\in\cS, a_{N_{d}(n)}\in\cA_{N_{d}(n)}$, and $m \geq M_n>M$, we have:
\begin{equation}
    \begin{aligned}
        \pi^{m}_n(s, a_{N_{d}(n)})
        &\in \underset{a_{n}\in\cA_{n}}{\argmax} Q^{V^{\pi_{m-1}}}\left(s, \pi^{m-1,n}_{-N_{d}[n]}(s,a_{N_{d}[n]}), a_{N_{d}(n)}, a_{n} \right) \\
        &\stackrel{(a)}{=} \underset{a_{n}\in\cA_{n}}{\argmax} Q^{V^{\circ}}\left(s, \pi^{m}_{n^-\setminus N_{d}(n)}(s,a_{N_{d}(n)}), a_{N_{d}(n)}, a_{n} \right) \\
        &\stackrel{(b)}{=} \underset{a_{n}\in\cA_{n}}{\argmax} \left[ Q_{1}^{V^{\circ}}(s, a_{N_{d}(n)}, a_{n}) + Q_{2}^{V^{\circ}}(s, \pi^{m}_{n^-\setminus N_{d}(n)}(s,a_{N_{d}(n)}), a_{N_{d}(n)}) \right] \\
        &\stackrel{(c)}{=} \underset{a_{n}\in\cA_{n}}{\argmax}Q_{1}^{V^{\circ}}(s, a_{N_{d}(n)}, a_{n}),
    \end{aligned}
\end{equation}
where
\begin{itemize}
    \item $(a)$: $\pi^{m}_{n^-\setminus N_{d}(n)}$ depends at most on actions of agents before $n$ due to the topological sort of $G_d$;
    \item $(b)$: follows from \cref{lem:decompose-Q};
    \item $(c)$: $Q_{2}^{V^{\circ}}$ is independent of $a_{n}$.
\end{itemize}
By \cref{asp:stopping-criterion}, we define $\pi^{\circ}_n := \pi^{M_n}_n = \pi^{m}_n, \forall m \geq M_n$.

{\bfseries Induction step:}
Assume the induction hypothesis holds for all agents $j > k$. We will show that it also holds for agent $k$.

For any $s\in\cS, a_{N_{d}(k)}\in\cA_{N_{d}(k)}$, and $m \geq M_k>\max_{j > k}M_j$, we have:
\begin{equation}
    \begin{aligned}
        \pi^{m}_k(s, a_{N_{d}(k)})
        &\in \underset{a_{k}\in\cA_{k}}{\argmax} Q^{V^{\pi_{m-1}}}\left(s, \pi^{m-1,k}_{-N_{d}[k]}(s,a_{N_{d}[k]}), a_{N_{d}(k)}, a_{k} \right)   \\
        &\stackrel{(a)}{=} \underset{a_{k}\in\cA_{k}}{\argmax} Q^{V^{\circ}}\left(s, \pi^{m}_{k^-\setminus N_{d}(k)}(s, a_{N_{d}(k)}), a_{N_{d}(k)}, a_k, \pi^{m-1}_{k^+}(s,a_{N_{d}[k]})\right) \\
        &\stackrel{(b)}{=} \underset{a_{k}\in\cA_{k}}{\argmax} Q^{V^{\circ}}\left(s, \pi^{m}_{k^-\setminus N_{d}(k)}(s, a_{N_{d}(k)}), a_{N_{d}(k)}, a_k, \pi^{\circ}_{k^+}(s,a_{N_{d}[k]})\right) \\
        &\stackrel{(c)}{=} \underset{a_{k}\in\cA_{k}}{\argmax} \left[ Q_{1}^{V^{\circ}}(s, a_{N_{d}(k)}, a_{k}, \pi^{\circ}_{k^+}(s,a_{N_{d}[k]})) + Q_{2}^{V^{\circ}}(s, a_{N_{d}(k)}, \pi^{m}_{k^-\setminus N_{d}(k)}(s, a_{N_{d}(k)})) \right] \\
        &\stackrel{(d)}{=} \underset{a_{k}\in\cA_{k}}{\argmax}Q_{1}^{V^{\circ}}(s, a_{N_{d}(k)}, a_{k}, \pi^{\circ}_{k^+}(s,a_{N_{d}[k]})).
    \end{aligned}
\end{equation}
The four equlities are explained as follows.
\begin{itemize}
    \item $(a)$: from \cref{lem:nested-neighbor}, $a_{N_{d}(k^+)}$ is a part of $a_{N_{d}[k]}$. Thus, $\pi^{m-1}_{k^+}$ depends at most on the actions of the agents in $N_{d}[k]$. $\pi^{m}_{k^-\setminus N_{d}(k)}$ depends at most on the actions of the agents before $k$ due to the topological sort of $G_d$;
    \item $(b)$: follows from the induction hypothesis;
    \item $(c)$: follows from \cref{lem:decompose-Q};
    \item $(d)$: $Q_{2}^{V^{\circ}}$ is independent of $a_{k}$.
\end{itemize}
By \cref{asp:stopping-criterion}, we define $\pi^{\circ}_k := \pi^{M_k}_k = \pi^{m}_k, \forall m \geq M_k$.

Thus, $\{\pi^m_i\}$ converges to $\pi^{\circ}_i$ in finite terms, and $V^{\circ} = V^{\pi^{\circ}}$.
Further, $\pi^{\circ}_i$ statisfies:
\begin{equation}
    \pi^{\circ}_i(s, a_{N_{d}(i)}) = \underset{a_{i}\in\cA_{i}}{\argmax} Q^{\pi^{\circ}}\left(s, \pi^{\circ}_{-N_{d}[i]}(s,a_{N_{d}[i]}), a_{N_{d}(i)}, a_{i} \right).
\end{equation}
When we view $\pi^{\circ}_i$ as a stochastic policy, it holds that,
\begin{equation}
    \mE_{\pi^{\circ}_{-N_{d}(i)}} \left[ Q^{\pi^{\circ}}(s,a) \right] = \max_{a_{i}} \mE_{\pi^{\circ}_{-N_{d}[i]}} \left[ Q^{\pi^{\circ}}(s,a) \right]
    = \max_{\pi'_i} \mE_{\pi'_i, \pi^{\circ}_{-N_{d}[i]}} \left[ Q^{\pi^{\circ}}(s,a) \right].
\end{equation}
Therefore, $\pi^{\circ}$ is a $G_d$-locally optimal policy.
It follows from \cref{thm:optimal} that $\pi^{\circ}$ is globally optimal. \qed

{\bfseries Proof of \cref{thm:convergence}}

This is a direct consequence of \cref{thm:convergence-original}.

By \cref{lem:convergence-V}, there exist $V^{\circ}$ and $M$ such that $V^{\pi^m} = V^{\circ}$ for all $m\geq M$.
Since $G_{c}$ is a CG of $Q^{\pi}$ for all policies $\pi$, it is also a CG of $Q^{V^{\circ}}=Q^{\pi^M}$.
By \cref{thm:convergence-original}, $\{\pi^k\}$ converges to a globally optimal policy. \qed

\section{A Class of Markov Games where CGs Exactly Exist}\label{sec:proof-preliminary}

We characterize a class of Markov games where the equation \eqref{Qsum} holds exactly.
Specifically, we assume the reward function $r$ and transition model $P$ can be decomposed over the edges of $G_c$, ensuring that $G_c$ serves as a CG for the value function $Q^V$.

\begin{proposition}\label{prop:decomposable-game}
    Let $r$ and $P$ denote the reward function and the transition model of a Markov game, respectively. 
    Suppose there exist two sets of functions $\{r_{ij}\}_{(i,j)\in\cE_c}$ and $\{P_{ij}\}_{(i,j)\in\cE_c}$, such that
    \begin{equation}
        r(s,a) = \sum_{(i,j)\in\cE_{c}}r_{ij}(s,a_i,a_j)
    \end{equation}
    and
    \begin{equation}
        P(s'|s,a) = \sum_{(i,j)\in\cE_{c}}P_{ij}(s'|s,a_i,a_j),
    \end{equation}
    where $\cE_{c}$ is the edge set of $G_{c} = (\cN, \cE_{c})$. Then $G_{c}$ is a CG of $Q^{V}$ for any $V\in\cR(\cS)$.
\end{proposition}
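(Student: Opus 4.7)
The plan is to prove this by direct substitution of the decomposition hypotheses into the definition of $Q^V$ and then identifying an edgewise local value function for every $(i,j)\in\mathcal{E}_c$.

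First I would expand $Q^V(s,a)$ using its definition $Q^V(s,a) = r(s,a) + \gamma \sum_{s'\in\cS} P(s'|s,a) V(s')$, and immediately substitute the two decomposition hypotheses for $r$ and $P$. Since $\cE_c$ and $\cS$ are both finite (the state space is finite by assumption and the edge set is obviously finite), the resulting double sum over $s'$ and $(i,j)$ can be exchanged without difficulty, yielding
\begin{equation}
    Q^V(s,a) = \sum_{(i,j)\in\cE_c} r_{ij}(s,a_i,a_j) + \gamma \sum_{(i,j)\in\cE_c} \sum_{s'\in\cS} P_{ij}(s'|s,a_i,a_j) V(s').
\end{equation}

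Next I would combine the two sums over $\cE_c$ into a single sum and, for each edge $(i,j)\in\cE_c$, define the candidate local value function
\begin{equation}
    Q^V_{ij}(s,a_i,a_j) \triangleq r_{ij}(s,a_i,a_j) + \gamma \sum_{s'\in\cS} P_{ij}(s'|s,a_i,a_j) V(s').
\end{equation}
By construction $Q^V_{ij}$ depends only on $s$, $a_i$, and $a_j$ (the $s'$ is summed out), so it is a valid function $\cS\times\cA_i\times\cA_j\to\mR$. With this definition, the earlier display reads $Q^V(s,a) = \sum_{(i,j)\in\cE_c} Q^V_{ij}(s,a_i,a_j)$, which is exactly the decomposition required by the definition of a CG (note that since $G_c$ is assumed connected one can absorb any vertex terms into edge terms, as already discussed in the main text, so no separate $Q_i$ terms are needed).

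There is essentially no main obstacle in this proof. The only subtlety worth verifying is that the edgewise $P_{ij}$ terms, despite being called pieces of a transition kernel, need not themselves be probability distributions; the decomposition hypothesis is purely algebraic, and it is only the sum $\sum_{(i,j)} P_{ij}(\cdot|s,a_i,a_j)$ that must be a valid distribution over $\cS$. This does not affect the above manipulation in any way, since the argument relies solely on linearity and the finiteness of $\cS$ and $\cE_c$, so the proof concludes with the conclusion that $G_c$ is a CG of $Q^V$ for every $V\in\cR(\cS)$.
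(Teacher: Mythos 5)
Your proposal is correct and is essentially identical to the paper's own proof: both substitute the decompositions of $r$ and $P$ into the definition of $Q^V$, exchange the finite sums, and define $Q^V_{ij}(s,a_i,a_j) = r_{ij}(s,a_i,a_j) + \gamma\sum_{s'\in\cS}P_{ij}(s'|s,a_i,a_j)V(s')$ as the local value functions. Your side remark that the $P_{ij}$ need not individually be probability kernels is a valid observation that the paper does not spell out, but it changes nothing in the argument.
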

\begin{proof}
    For any real-valued function $V: \cS\to\mR$, it holds that
    \begin{equation}
        \begin{aligned}
            Q^{V}(s,a) &= r(s,a) + \gamma\sum_{s'\in\cS}P(s'|s,a)V(s') \\
            &= \sum_{(i,j)\in\cE_{c}}r_{ij}(s,a_i,a_j) + \gamma\sum_{s'\in\cS}\sum_{(i,j)\in\cE_{c}}P_{ij}(s'|s,a_i,a_j)V(s') \\
            &= \sum_{(i,j)\in\cE_{c}}\left( r_{ij}(s,a_i,a_j) + \gamma\sum_{s'\in\cS}P_{ij}(s'|s,a_i,a_j)V(s') \right) \\
            &\triangleq \sum_{(i,j)\in\cE_{c}}Q_{ij}^{V}(s,a_i,a_j).
        \end{aligned}
    \end{equation}
    Therefore, $G_{c}$ is a CG of $Q^{V}$.
\end{proof}

\section{Proof of the Equivalence Between Nash Equilibrium and Agent-by-Agent Optimal}

We show the equivalence between the Nash equilibrium defined in \eqref{eq:Nash Equilibrium} and the agent-by-agent optimal defined in \eqref{eq:agent-by-agent} by proposing a proposition as follows.
\begin{proposition}\label{prop:nash}
    Let $\pi$ be a joint policy of independent policies $\pi_i, i\in\cN$. 
    Then, The following claims are equivalent:
    \begin{enumerate}
        \item[(i)] $\mE_{s\sim\mu}\left[ V^{\pi}(s) \right] = \max_{\pi'_i} \mE_{s\sim\mu}\left[ V^{\pi'_i, \pi_{-i}}(s) \right], \mu(s) > 0, \forall s\in\cS ,i\in\cN$.
        \item[(ii)] $V^{\pi}(s) = \max_{\pi'_i} V^{\pi'_i, \pi_{-i}}(s), \forall s\in\cS ,i\in\cN$.
        \item[(iii)] $V^{\pi}(s) = \max_{\pi'_i} \mE_{\pi'_{i}, \pi_{-i}} \left[ Q^{\pi}(s,a) \right], \forall s\in\cS ,i\in\cN$.
    \end{enumerate}
\end{proposition}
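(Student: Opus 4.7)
The plan is to prove the three-way equivalence by establishing $\text{(ii)}\Leftrightarrow\text{(i)}$ and $\text{(ii)}\Leftrightarrow\text{(iii)}$. The first pair relates pointwise and integrated forms of the best-response condition; the second relates the full-trajectory deviation with its one-step analogue, and is handled through the Bellman operators $T_\pi$ recalled in \cref{sec:preliminary}, together with their monotonicity and contraction properties.

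For $\text{(ii)}\Rightarrow\text{(i)}$, I would integrate the pointwise equality against any strictly positive $\mu$; the inequality $\mE_{s\sim\mu}[V^\pi(s)]\le \max_{\pi'_i}\mE_{s\sim\mu}[V^{\pi'_i,\pi_{-i}}(s)]$ is automatic by taking $\pi'_i=\pi_i$, so (ii) promotes it to equality. For the converse $\text{(i)}\Rightarrow\text{(ii)}$, I would exploit that, with $\pi_{-i}$ frozen, agent $i$ faces a standard single-agent discounted MDP whose finite state and action spaces guarantee a single stationary policy $\pi^\star_i$ achieving $V^{\pi^\star_i,\pi_{-i}}(s)=\max_{\pi'_i}V^{\pi'_i,\pi_{-i}}(s)$ uniformly in $s$. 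Thus $V^\pi(s)\le V^{\pi^\star_i,\pi_{-i}}(s)$ pointwise while (i) gives $\mE_{s\sim\mu}[V^\pi(s)]=\mE_{s\sim\mu}[V^{\pi^\star_i,\pi_{-i}}(s)]$; since $\mu(s)>0$, pointwise equality follows, yielding (ii).

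For $\text{(iii)}\Rightarrow\text{(ii)}$, (iii) is equivalent to $V^\pi\ge T_{(\pi'_i,\pi_{-i})}V^\pi$ for every $\pi'_i$. Iterating and invoking monotonicity of the Bellman operator gives $V^\pi\ge T_{(\pi'_i,\pi_{-i})}^n V^\pi$ for all $n$, and passing to the contractive limit yields $V^\pi\ge V^{(\pi'_i,\pi_{-i})}$, i.e., (ii). For $\text{(ii)}\Rightarrow\text{(iii)}$, I would argue by contradiction: suppose $T_{(\pi'_i,\pi_{-i})}V^\pi(s_0)>V^\pi(s_0)$ at some $s_0$. Replacing $\pi'_i$ by the state-wise greedy policy $\tilde\pi_i$ maximizing $\mE_{\pi'_i,\pi_{-i}}[Q^\pi(s,a)]$ (well-defined and deterministic because the action space is finite) gives $T_{(\tilde\pi_i,\pi_{-i})}V^\pi\ge V^\pi$ globally with strict inequality at $s_0$; monotone iteration then produces $V^{(\tilde\pi_i,\pi_{-i})}(s_0)\ge T_{(\tilde\pi_i,\pi_{-i})}V^\pi(s_0)>V^\pi(s_0)$, contradicting (ii). The matching lower bound $V^\pi(s)\le \max_{\pi'_i}\mE_{\pi'_i,\pi_{-i}}[Q^\pi(s,a)]$ from $\pi'_i=\pi_i$ then closes the gap.

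The main obstacle is the direction $\text{(i)}\Rightarrow\text{(ii)}$: upgrading an equality-in-expectation to a pointwise equality is not automatic, because $V^\pi(s)$ is not comparable in a fixed direction to a single $V^{\pi'_i,\pi_{-i}}(s)$, and a priori the pointwise supremum over $\pi'_i$ need not be attained by a stationary policy independent of $s$. The argument succeeds only by appealing to single-agent MDP theory, which supplies the uniformly optimal stationary best response $\pi^\star_i$ against the frozen $\pi_{-i}$; this is the nontrivial ingredient that lets the positivity of $\mu$ transfer an integrated identity to a pointwise one.
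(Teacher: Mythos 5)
Your proposal is correct and follows essentially the same route as the paper: (i)$\Leftrightarrow$(ii) via the existence of a uniformly optimal stationary best response against the frozen $\pi_{-i}$ together with $\mu(s)>0$, and (ii)$\Leftrightarrow$(iii) via monotonicity and contraction of $T_{(\pi'_i,\pi_{-i})}$. Your treatment of (ii)$\Rightarrow$(iii) via the state-wise greedy deviation is in fact slightly more careful than the paper's one-line appeal to monotonicity, which glosses over exactly the point your contradiction argument handles.
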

\begin{proof}
    (i) $\rightarrow$ (ii). Assume there exist $s\in\cS, i\in\cN$ such that $V^{\pi}(s) < \max_{\pi'_i} V^{\pi'_i, \pi_{-i}}(s)$. 
    Then, let $\pi''$ be a policy such that $V^{\pi''_i, \pi_{-i}}(s)=\max_{\pi'_i} V^{\pi'_i, \pi_{-i}}(s), \forall s\in\cS ,i\in\cN$.
    Thus, $\mE_{s\sim\mu}\left[ V^{\pi}(s) \right] < \mE_{s\sim\mu}\left[ V^{\pi''_i, \pi_{-i}}(s) \right] = \max_{\pi'_i} \mE_{s\sim\mu}\left[ V^{\pi'_i, \pi_{-i}}(s) \right]$, 
    which contradicts (i).

    (ii) $\rightarrow$ (i). This is straightforward by definition.

    (ii) $\rightarrow$ (iii). It is clear that $V^{\pi}(s) \leq \max_{\pi'_i} \mE_{\pi'_{i}, \pi_{-i}} \left[ Q^{\pi}(s,a) \right]$.
    Consider the reverse inequality. For any $\pi'_{i}$, $\lim_{k\to\infty}T^k_{\pi'_i, \pi_{-i}} V^{\pi}(s) = V^{\pi'_i, \pi_{-i}}(s) \leq V^{\pi}(s)$.
    By the monotonicity of the Bellman operator \cite{bertsekas2022abstract}, $\mE_{\pi'_{i}, \pi_{-i}} \left[ Q^{\pi}(s,a) \right] = T_{\pi'_i, \pi_{-i}} V^{\pi}(s) \leq V^{\pi}(s)$.
    Thus, $V^{\pi}(s) \geq \max_{\pi'_i} \mE_{\pi'_{i}, \pi_{-i}} \left[ Q^{\pi}(s,a) \right]$.

    (iii) $\rightarrow$ (ii). Clearly, $V^{\pi}(s) \leq \max_{\pi'_i} V^{\pi'_i, \pi_{-i}}(s)$.
    Consider the reverse inequality. For any $\pi'_{i}$, $T_{\pi'_i, \pi_{-i}} V^{\pi}(s) = \mE_{\pi'_{i}, \pi_{-i}} \left[ Q^{\pi}(s,a) \right] \leq V^{\pi}(s)$.
    By the monotonicity of the Bellman operator, $\lim_{k\to\infty}T^k_{\pi'_i, \pi_{-i}} V^{\pi}(s) = V^{\pi'_i, \pi_{-i}}(s) \leq V^{\pi}(s)$.
    Thus, $V^{\pi}(s) \geq \max_{\pi'_i} V^{\pi'_i, \pi_{-i}}(s)$.
\end{proof}

\section{Construction of Action-Dependent Graphs}\label{sec:greedy-alg}

To elucidate the construction of ADGs, we present \cref{alg:greedy} that efficiently derives an ADG from a given CG such that the condition \eqref{eq:graph-condition} is satisfied.

\begin{algorithm}[!htb]
    \caption{Greedy Algorithm: Finding a Sparse ADG}\label{alg:greedy}
    \begin{algorithmic}
        \STATE {\bfseries Input:} A CG $G_c$
        \STATE {\bfseries Output:} An ADG $G_d=(\cN,\cE_d)$
        \STATE Initialize an empty graph $G_d=(\cN, \cE_d)$ with all vertices unindexed
        \FOR{$i=0$ {\bfseries to} $n-1$}
        \STATE Assign index $n-i$ to a vertex among the unindexed ones, such that the size of $N_c((n-i)^{[+]})$ is minimized
        \ENDFOR
        \STATE Construct the edge set $\mathcal{E}_d$ by adding edges $(j, i)$ for each vertex $i \in \mathcal{N}$ and each $j \in N_c(i^{[+]})$ as specified in \eqref{eq:graph-condition}
    \end{algorithmic}
\end{algorithm}

\section{Experimental Details}\label{sec:details}




{\bfseries Setup of Coordination Polymatrix Game.}
In matrix cooperative games, different coordination graphs and their corresponding action dependency graphs are shown in \cref{fig:topology_polymatrix}. The payoff matrices in polymatrix coordination games are shown in Tables \ref{tab:Star reward} to \ref{tab:Tree reward}. Any payoff matrix not displayed in the tables is the same as that in \cref{tab:Ring reward1}.
\begin{figure}[!ht]
    \vskip -0.2in
	\begin{center}
		\centerline{\includegraphics[width=\columnwidth]{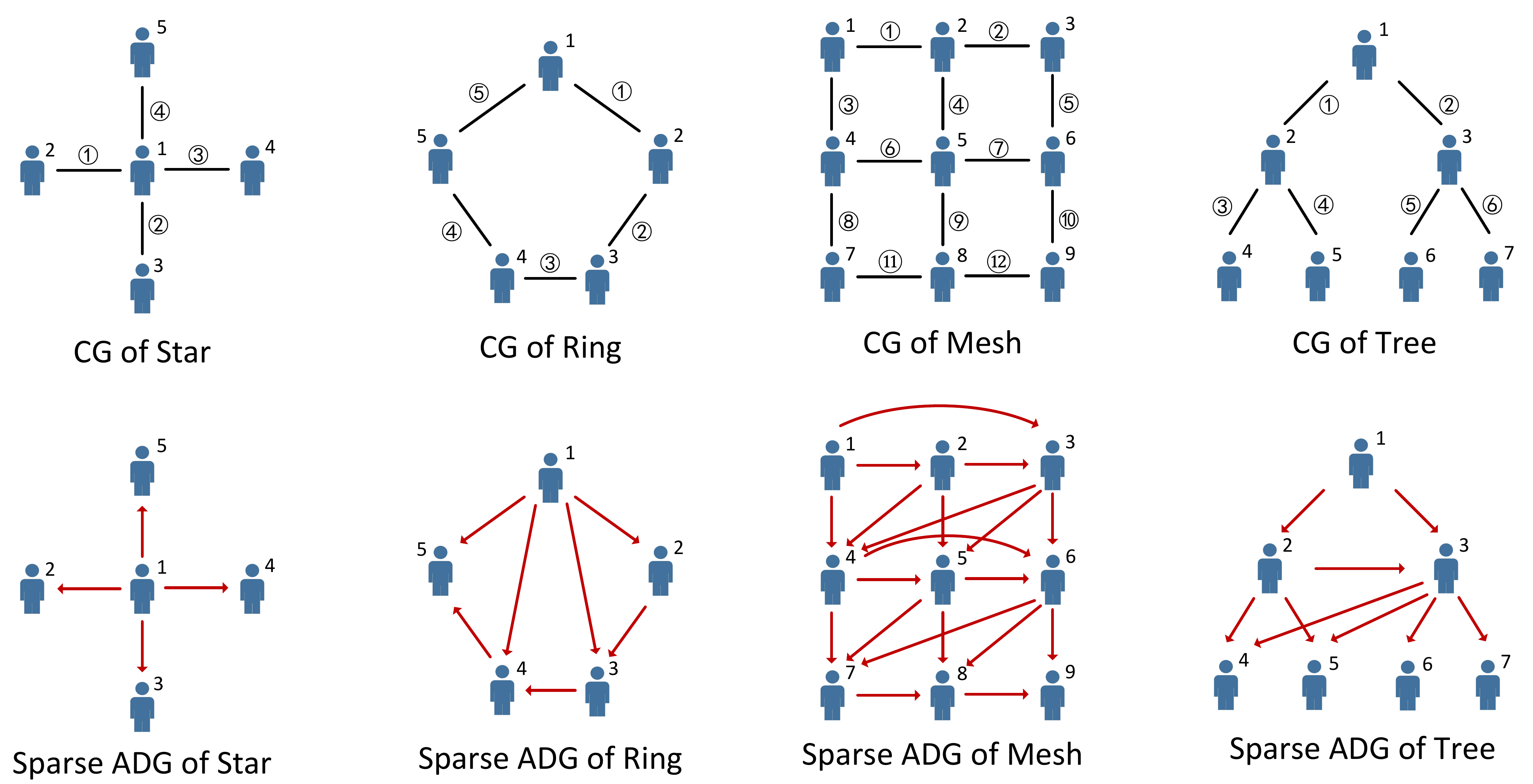}}
		\caption{The CG and sparse ADG of polymatrix coordination game.}
		\label{fig:topology_polymatrix}
	\end{center}
    \vskip -0.2in
\end{figure}

\begin{table}[!ht]
  \centering 
  \begin{minipage}[c]{0.48\textwidth} 
    \centering
    \caption{\textcircled{\footnotesize{1}} in Star}\label{tab:Star reward}
    \vspace{1em}
    \begin{tabular}{cccccc}
      \hline
      \( a_1 \backslash a_2 \) & 0    & 1    & 2    & 3    & 4    \\
      \hline
      0    & 3.5   & \textcolor{red}{5.0}  & 0.5  & 0.5  & 0.5  \\
      \hline
      1    & 0.5   & 3.5  & \textcolor{red}{6.0}  & 0.5  & 0.5  \\
      \hline
      2    & 0.5   & 0.5  & 3.5  & 0.5  & 0.5  \\
      \hline
      3    & 0.5   & 0.5  & 0.5  & 3.25 & 0.5  \\
      \hline
      4    & 0.5   & 0.5  & 0.5  & 0.5  & 3.0  \\
      \hline
    \end{tabular}
  \end{minipage}
  \hfill 
  \begin{minipage}[c]{0.48\textwidth} 
    \centering
    \caption{\textcircled{\footnotesize{2}} in Star} 
    \vspace{1em}
    \begin{tabular}{cccccc}
      \hline
      \( a_1 \backslash a_3 \) & 0    & 1    & 2    & 3    & 4    \\
      \hline
      0    & 3.5   & 0.5  & \textcolor{red}{5.0}  & 0.5  & 0.5  \\
      \hline
      1    & 0.5   & 3.5  & \textcolor{red}{6.0}  & 0.5  & 0.5  \\
      \hline
      2    & 0.5   & 0.5  & 3.5  & 0.5  & 0.5  \\
      \hline
      3    & 0.5   & 0.5  & 0.5  & 3.25 & 0.5  \\
      \hline
      4    & 0.5   & 0.5  & 0.5  & 0.5  & 3.0  \\
      \hline
    \end{tabular}
  \end{minipage}
\end{table}

\begin{table}[!ht]
  \centering 
  \begin{minipage}[c]{0.48\textwidth} 
    \centering
    \caption{\textcircled{\footnotesize{3}} in Star} 
    \vspace{1em}
    \begin{tabular}{cccccc}
    \hline
    \( a_1 \backslash a_4 \)  & 0    & 1    & 2    & 3    & 4    \\
    \hline
    0    & 3.5   & 0.5  & 0.5  & \textcolor{red}{5.0}  & 0.5        \\
    \hline
    1    & 0.5   & 3.5  & \textcolor{red}{6.0}  & 0.5  & 0.5       \\
    \hline
    2    & 0.5   & 0.5  & 3.5  & 0.5  & 0.5       \\
    \hline
    3    & 0.5   & 0.5  & 0.5  & 3.25 & 0.5       \\
    \hline
    4    & 0.5   & 0.5  & 0.5  & 0.5  & 3.0       \\
    \hline
    \end{tabular}
  \end{minipage}
  \hfill 
  \begin{minipage}[c]{0.48\textwidth} 
    \centering
    \caption{\textcircled{\footnotesize{4}} in Star} 
    \vspace{1em}
    \begin{tabular}{cccccc}
    \hline
    \( a_1 \backslash a_5 \)  & 0    & 1    & 2    & 3    & 4    \\
    \hline
    0    & 3.5   & 0.5  & 0.5  & 0.5  & \textcolor{red}{5.0}        \\
    \hline
    1    & 0.5   & \textcolor{blue}{0.5}  & 0.5  & 0.5  & 0.5       \\
    \hline
    2    & 0.5   & 0.5  & 3.5  & 0.5  & 0.5       \\
    \hline
    3    & 0.5   & 0.5  & 0.5  & 3.25 & 0.5       \\
    \hline
    4    & 0.5   & 0.5  & 0.5  & 0.5  & 3.0       \\
    \hline
    \end{tabular}
  \end{minipage}
\end{table}

\begin{table}[!ht]
  \centering 
  \begin{minipage}[c]{0.48\textwidth} 
    \centering
    \caption{\textcircled{1} in Ring}\label{tab:Ring reward1}
    \vspace{1em}
    \begin{tabular}{cccccc}
    \hline
    \( a_1 \backslash a_2 \)  & 0    & 1    & 2    & 3    & 4    \\
    \hline
    0    & 1.0   & 0.1  & 0.1  & 0.1  & 0.1        \\
    \hline
    1    & 0.1   & 1.0  & 0.1  & 0.1  & 0.1       \\
    \hline
    2    & 0.1   & 0.1  & 1.0  & 0.1  & 0.1       \\
    \hline
    3    & 0.1   & 0.1  & 0.1  & 1.0 & 0.1       \\
    \hline
    4    & 0.1   & 0.1  & 0.1  & 0.1  & 1.0       \\
    \hline
    \end{tabular}
  \end{minipage}
  \hfill 
  \begin{minipage}[c]{0.48\textwidth} 
    \centering
    \caption{\textcircled{\footnotesize{2}} in Ring} 
    \vspace{1em}
    \begin{tabular}{cccccc}
    \hline
    \( a_2 \backslash a_3 \)  & 0    & 1    & 2    & 3    & 4    \\
    \hline
    0    & 1.0   & 0.1  & 0.1  & 0.1  & 0.1        \\
    \hline
    1    & 0.1   & 1.0  & \textcolor{red}{2.0}  & 0.1  & 0.1       \\
    \hline
    2    & 0.1   & 0.1  & 1.0  & 0.1  & 0.1       \\
    \hline
    3    & 0.1   & 0.1  & 0.1  & 1.0 & 0.1       \\
    \hline
    4    & 0.1   & 0.1  & 0.1  & 0.1  & 1.0       \\
    \hline
    \end{tabular}
  \end{minipage}
\end{table}

\begin{table}[!ht]
  \centering 
  \begin{minipage}[c]{0.48\textwidth} 
    \centering
    \caption{\textcircled{\footnotesize{3}} in Ring} 
    \vspace{1em}
    \begin{tabular}{cccccc}
    \hline
    \( a_3 \backslash a_4 \)  & 0    & 1    & 2    & 3    & 4    \\
    \hline
    0    & 1.0   & 0.1  & 0.1  & 0.1  & 0.1        \\
    \hline
    1    & 0.1   & 1.0  & 0.1  & 0.1  & 0.1       \\
    \hline
    2    & 0.1   & 0.1  & 1.0  & 0.1  & 0.1       \\
    \hline
    3    & 0.1   & 0.1  & 0.1  & 1.0 & 0.1       \\
    \hline
    4    & 0.1   & 0.1  & 0.1  & 0.1  & 1.0       \\
    \hline
    \end{tabular}
  \end{minipage}
  \hfill 
  \begin{minipage}[c]{0.48\textwidth} 
    \centering
    \caption{\textcircled{\footnotesize{4}} in Ring}
    \vspace{1em}
    \begin{tabular}{cccccc}
    \hline
    \( a_4 \backslash a_5 \)  & 0    & 1    & 2    & 3    & 4    \\
    \hline
    0    & 1.0   & 0.1  & 0.1  & 0.1  & 0.1        \\
    \hline
    1    & 0.1   & 1.0  & \textcolor{red}{2.0}  & 0.1  & 0.1       \\
    \hline
    2    & 0.1   & 0.1  & 1.0  & 0.1  & 0.1       \\
    \hline
    3    & 0.1   & 0.1  & 0.1  & 1.0 & 0.1       \\
    \hline
    4    & 0.1   & 0.1  & 0.1  & 0.1  & 1.0       \\
    \hline
    \end{tabular}
  \end{minipage}
\end{table}

\begin{table}[!ht]
  \centering 
  \begin{minipage}[c]{0.48\textwidth} 
    \centering
    \caption{\textcircled{\footnotesize{5}} in Ring} 
    \vspace{1em}
    \begin{tabular}{cccccc}
    \hline
    \( a_1 \backslash a_5 \)  & 0    & 1    & 2    & 3    & 4    \\
    \hline
    0    & 1.0   & 0.1  & 0.1  & 0.1  & 0.1        \\
    \hline
    1    & 0.1   & \textcolor{red}{10.0}  & 0.1  & 0.1  & 0.1       \\
    \hline
    2    & 0.1   & 0.1  & 1.0  & 0.1  & 0.1       \\
    \hline
    3    & 0.1   & 0.1  & 0.1  & 1.0 & 0.1       \\
    \hline
    4    & 0.1   & 0.1  & 0.1  & 0.1  & 1.0       \\
    \hline
    \end{tabular}
  \end{minipage}
\end{table}

\begin{table}[!ht]
  \centering 
  \begin{minipage}[c]{0.48\textwidth} 
    \centering
    \caption{\textcircled{\footnotesize{7}} in Mesh} 
    \vspace{1em}
    \begin{tabular}{cccccc}
    \hline
    \( a_5 \backslash a_6 \)  & 0    & 1    & 2    & 3    & 4    \\
    \hline
    0    & 1.0   & 0.1  & 0.1  & 0.1  & 0.1        \\
    \hline
    1    & 0.1   & 1.0  & \textcolor{red}{2.0}  & 0.1  & 0.1       \\
    \hline
    2    & 0.1   & 0.1  & 1.0  & 0.1  & 0.1       \\
    \hline
    3    & 0.1   & 0.1  & 0.1  & 1.0 & 0.1       \\
    \hline
    4    & 0.1   & 0.1  & 0.1  & 0.1  & 1.0       \\
    \hline
    \end{tabular}
  \end{minipage}
  \hfill 
  \begin{minipage}[c]{0.48\textwidth} 
    \centering
    \caption{\textcircled{\footnotesize{10}} in Mesh} 
    \vspace{1em}
    \begin{tabular}{cccccc}
    \hline
    \( a_6 \backslash a_9 \)  & 0    & 1    & 2    & 3    & 4    \\
    \hline
    0    & 1.0   & 0.1  & 0.1  & 0.1  & 0.1        \\
    \hline
    1    & 0.1   & \textcolor{red}{10.0}  & 0.1  & 0.1  & 0.1       \\
    \hline
    2    & 0.1   & 0.1  & 1.0  & 0.1  & 0.1       \\
    \hline
    3    & 0.1   & 0.1  & 0.1  & 1.0 & 0.1       \\
    \hline
    4    & 0.1   & 0.1  & 0.1  & 0.1  & 1.0       \\
    \hline
    \end{tabular}
  \end{minipage}
\end{table}

\begin{table}[!ht]
  \centering 
  \begin{minipage}[c]{0.48\textwidth} 
    \centering
    \caption{\textcircled{\footnotesize{2}} in Tree} 
    \vspace{1em}
    \begin{tabular}{cccccc}
    \hline
    \( a_1 \backslash a_3 \)  & 0    & 1    & 2    & 3    & 4    \\
    \hline
    0    & 1.0   & 0.1  & 0.1  & 0.1  & 0.1        \\
    \hline
    1    & 0.1   & \textcolor{red}{2.0}  & \textcolor{red}{1.5}  & 0.1  & 0.1       \\
    \hline
    2    & 0.1   & 0.1  & 1.0  & 0.1  & 0.1       \\
    \hline
    3    & 0.1   & 0.1  & 0.1  & 1.0 & 0.1       \\
    \hline
    4    & 0.1   & 0.1  & 0.1  & 0.1  & 1.0       \\
    \hline
    \end{tabular}
  \end{minipage}
  \hfill 
  \begin{minipage}[c]{0.48\textwidth} 
    \centering
    \caption{\textcircled{\footnotesize{5}} in Tree} 
    \vspace{1em}
    \begin{tabular}{cccccc}
    \hline
    \( a_3 \backslash a_6 \)  & 0    & 1    & 2    & 3    & 4    \\
    \hline
    0    & 1.0   & 0.1  & 0.1  & 0.1  & 0.1        \\
    \hline
    1    & 0.1   & \textcolor{blue}{0.5}  & 0.1  & 0.1  & 0.1       \\
    \hline
    2    & 0.1   & 0.1  & \textcolor{red}{1.5}  & 0.1  & 0.1       \\
    \hline
    3    & 0.1   & 0.1  & 0.1  & 1.0 & 0.1       \\
    \hline
    4    & 0.1   & 0.1  & 0.1  & 0.1  & 1.0       \\
    \hline
    \end{tabular}
  \end{minipage}
\end{table}

\begin{table}[!ht]
  \centering 
  \begin{minipage}[c]{0.48\textwidth} 
    \centering
    \caption{\textcircled{\footnotesize{6}} in Tree}\label{tab:Tree reward}
    \vspace{1em}
    \begin{tabular}{cccccc}
    \hline
    \( a_3 \backslash a_7 \)  & 0    & 1    & 2    & 3    & 4    \\
    \hline
    0    & 1.0   & 0.1  & 0.1  & 0.1  & 0.1        \\
    \hline
    1    & 0.1   & \textcolor{blue}{0.5}  & 0.1  & 0.1  & 0.1       \\
    \hline
    2    & 0.1   & 0.1  & \textcolor{red}{1.5}  & 0.1  & 0.1       \\
    \hline
    3    & 0.1   & 0.1  & 0.1  & 1.0 & 0.1       \\
    \hline
    4    & 0.1   & 0.1  & 0.1  & 0.1  & 1.0       \\
    \hline
    \end{tabular}
  \end{minipage}
\end{table}

{\bfseries Experimental Hyperparameters.}
For algorithms that are not in tabular form, our implementation builds upon the open-source EPyMARL framework \cite{papoudakis2021benchmarking}, employing the Adam optimizer for training.
Unless otherwise specified, we adopt EPyMARL's default hyperparameters, which are also detailed in the config of code provided in supplementary material. 
\cref{tab:hyperparameters} lists the modified hyperparameters for different algorithms and environments. 

In \cref{tab:hyperparameters}, the target update interval is categorized into soft and hard updates: soft updates adjust the target network incrementally during each gradient descent step, while hard updates occur once every specified number of gradient descent steps. 
For the DCG algorithm, we utilize the default hyperparameters from \cite{bohmer2020deep}. 
Although we explored hyperparameter optimization for DCG, we found its performance sensitive to changes, exhibiting instability under alternative configurations.

\begin{table}[!ht]
\centering
\caption{Experimental Hyperparameters}\label{tab:hyperparameters}
\vspace{1em}
\begin{small}
\begin{tabular}{cccccc}
\toprule
 & MAPPO(ATSC) & QMIX(ATSC) & DCG(ATSC) & QMIX(SMAC) & DCG(SMAC) \\
\midrule
learning rate          & 0.0004     & 0.0001     & 0.0005    & $2^{-12}$ & 0.0005\\
weight decay           & 0          & 0          & 0         & $2^{-11}$ & 0 \\
batch size             & 8          & 8          & 32        & 32        & 32 \\
buffer size            & 8          & 32         & 500       & 4096      & 500 \\
target update interval & 0.01(soft) & 0.01(soft) & 100(hard) & 200(hard) & 200(hard) \\
\bottomrule
\end{tabular}
\end{small}
\end{table}

{\bfseries Neural Network Architecture.}
For the MAPPO, QMIX, and DCG algorithms without ADGs, we adopt the default neural network configurations specified in \cite{papoudakis2021benchmarking} and \cite{bohmer2020deep}. 
For MAPPO and QMIX with ADGs, we only modify the agent network architecture as follows. 
Let $o_i \in \mathbb{R}^{d_o}$ denote the observational features for agent $i$, and $a_{N_d(i)} \in \mathbb{R}^{d_a}$ represent the concatenated actions features of neighboring agents in the ADG. The output is computed as:
\begin{equation}
    h_1 = \text{ReLU}(W_1 o_i + b_1), \quad h_2 = \text{ReLU}(W_2 h_1 + b_2), \quad z_o = W_3 h_2 + b_3,
\end{equation}
where $W_1 \in \mathbb{R}^{64 \times d_o}$, $W_2 \in \mathbb{R}^{64 \times 64}$, $W_3 \in \mathbb{R}^{d_{\text{act}} \times 64}$, and $b_1, b_2 \in \mathbb{R}^{64}$, $b_3 \in \mathbb{R}^{d_{\text{act}}}$ are the weights and biases, respectively, mapping to a 64-dimensional vector at each hidden layer and an action-dimensional output $z_o \in \mathbb{R}^{d_{\text{act}}}$. 
Concurrently, the concatenated input $[o_i, a_{N_d(i)}] \in \mathbb{R}^{d_o + d_a}$ is processed through a parallel pipeline with distinct parameters:
\begin{equation}
    h_1' = \text{ReLU}(W_1' [o_i, a_{N_d(i)}] + b_1'), \quad h_2' = \text{ReLU}(W_2' h_1' + b_2'), \quad z_a = W_3' h_2' + b_3',
\end{equation}
where $W_1' \in \mathbb{R}^{64 \times (d_o + d_a)}$, $W_2' \in \mathbb{R}^{64 \times 64}$, $W_3' \in \mathbb{R}^{d_{\text{act}} \times 64}$, and $b_1', b_2' \in \mathbb{R}^{64}$, $b_3' \in \mathbb{R}^{d_{\text{act}}}$. The final output is obtained as:
\begin{equation}
    z = z_o + z_a.
\end{equation}

{\bfseries Setup of ATSC.}
The ATSC environment comprises nine agents. The CG and sparse ADG for ATSC are represented by adjacency lists, as shown in \cref{tab:CG-ATSC} and \cref{tab:ADG-ATSC}, respectively.
Experiments are conducted using ten random seeds (1000, 2000, 3000, 4000, 5000, 6000, 7000, 8000, 9000, 10000), with each run requiring approximately two hours on a system equipped with an Intel Xeon Platinum 8481C CPU @ 2.70GHz, 80GB of memory, and a GeForce RTX 4090 D GPU.
\begin{table}[!ht]
\centering
\begin{minipage}{0.45\textwidth}
\centering
\caption{CG of ATSC}\label{tab:CG-ATSC}
\vspace{1em}
\begin{tabular}{cl}
\hline
\textbf{Vertex} & \textbf{Neighbors} \\
\hline
0 & 1, 3 \\
1 & 0, 2, 4 \\
2 & 1, 5 \\
3 & 0, 4, 6 \\
4 & 1, 3, 5, 7 \\
5 & 2, 4, 8 \\
6 & 3, 7 \\
7 & 4, 6, 8 \\
8 & 5, 7 \\
\hline
\end{tabular}
\end{minipage}
\hfill
\begin{minipage}{0.45\textwidth}
\centering
\caption{Sparse ADG of ATSC}\label{tab:ADG-ATSC}
\vspace{1em}
\begin{tabular}{cl}
\hline
\textbf{Vertex} & \textbf{Out-Degree Neighbors} \\
\hline
0 & \\
1 & 0 \\
2 & 1, 0 \\
3 & 2, 1, 0 \\
4 & 3, 2, 1 \\
5 & 4, 3, 2 \\
6 & 5, 4, 3 \\
7 & 6, 5, 4 \\
8 & 7, 5 \\
\hline
\end{tabular}
\end{minipage}
\end{table}

{\bfseries Setup of SMAC.}
In the MMM2 map of the SMAC environment, our side controls ten agents (seven Marines, two Marauders, and one Medivac), 
while the opposing side commands 12 agents (eight Marines, three Marauders, and one Medivac).
The artificial CG and sparse ADG are represented by adjacency lists, as shown in \cref{tab:CG-SMAC} and \cref{tab:ADG-SMAC}, respectively. 
Experiments utilized ten random seeds (42, 126, 252, 420, 630, 882, 1176, 1512, 1890, 2310), with each run requiring approximately two hours on a system equipped with an Intel Xeon Platinum 8481C CPU @ 2.70GHz, 80GB of memory, and a GeForce RTX 4090 D GPU.
\begin{table}[!ht]
\centering
\begin{minipage}{0.45\textwidth}
\centering
\caption{CG of SMAC}\label{tab:CG-SMAC}
\vspace{1em}
\begin{tabular}{cl}
\hline
\textbf{Vertex} & \textbf{Neighbors} \\
\hline
0 & 1, 3 \\
1 & 0, 2, 4 \\
2 & 1, 5 \\
3 & 0, 4, 6 \\
4 & 1, 3, 5, 7 \\
5 & 2, 4, 8 \\
6 & 3, 7, 9 \\
7 & 4, 6, 8, 9 \\
8 & 5, 7, 9 \\
9 & 6, 7, 8 \\
\hline
\end{tabular}
\end{minipage}
\hfill
\begin{minipage}{0.45\textwidth}
\centering
\caption{Sparse ADG of SMAC}\label{tab:ADG-SMAC}
\vspace{1em}
\begin{tabular}{cl}
\hline
\textbf{Vertex} & \textbf{Out-Degree Neighbors} \\
\hline
0 & \\
1 & 0 \\
2 & 1, 0 \\
3 & 2, 1, 0 \\
4 & 3, 2, 1 \\
5 & 4, 3, 2 \\
6 & 5, 4, 3 \\
7 & 6, 5, 4 \\
8 & 5, 6, 7 \\
9 & 6, 7, 8 \\
\hline
\end{tabular}
\end{minipage}
\end{table}

\end{document}